\documentclass{article}




\usepackage[final]{neurips_2024}


\usepackage{times}
\usepackage{soul}
\usepackage{url}
\usepackage[utf8]{inputenc} 
\usepackage[T1]{fontenc}    
\usepackage{hyperref}       
\usepackage{url}            
\usepackage{booktabs}       
\usepackage{amsfonts}       
\usepackage{nicefrac}       
\usepackage{microtype}      
\usepackage{xcolor}         
\usepackage[small]{caption}
\usepackage{graphicx}
\usepackage{amsmath}
\usepackage{subfigure}
\usepackage{amsthm}
\usepackage{amssymb,amsfonts}
\usepackage{algpseudocode}
\usepackage[linesnumbered,ruled,vlined]{algorithm2e}
\usepackage{changepage}
\usepackage{subcaption}
\usepackage{multirow}

\newtheorem{theorem}{Theorem}
\newtheorem{lemma}{Lemma}
\newtheorem{assumption}{Assumption}
\newtheorem{definition}{Definition}
\urlstyle{same}

\title{Gradient-Guided Conditional Diffusion Models for Private Image Reconstruction: Analyzing Adversarial Impacts of Differential Privacy and Denoising}

%

\author{%
\AND{Tao Huang$^{1*}$ \quad Jiayang Meng$^{2*}$ \quad Hong Chen$^2$ \quad Guolong Zheng$^1$}
\AND {Xu Yang$^1$ \quad  Xun Yi$^3$ \quad Hua Wang$^4$} \\
$^1$ School of Computer and Big Data, Minjiang University\\
$^2$ School of Information, Renmin University of China\\
$^3$ RMIT University\\
$^4$ Victoria University\\
\texttt{\{huang-tao, gzheng, xu.yang\}@mju.edu.cn}\\
\texttt{\{jiayangmeng, chong\}@ruc.edu.cn}\\
\texttt{xun.yi@rmit.edu.au}\\
\texttt{hua.wang@vu.edu.au}\\
$^*$ The authors contribute equally to this work.
}

\begin{document}

\maketitle

\begin{abstract}
  We investigate the construction of gradient-guided conditional diffusion models for reconstructing private images, focusing on the adversarial interplay between differential privacy noise and the denoising capabilities of diffusion models. While current gradient-based reconstruction methods struggle with high-resolution images due to computational complexity and prior knowledge requirements, we propose two novel methods that require minimal modifications to the diffusion model's generation process and eliminate the need for prior knowledge. Our approach leverages the strong image generation capabilities of diffusion models to reconstruct private images starting from randomly generated noise, even when a small amount of differentially private noise has been added to the gradients. We also conduct a comprehensive theoretical analysis of the impact of differential privacy noise on the quality of reconstructed images, revealing the relationship among noise magnitude, the architecture of attacked models, and the attacker's reconstruction capability. Additionally, extensive experiments validate the effectiveness of our proposed methods and the accuracy of our theoretical findings, suggesting new directions for privacy risk auditing using conditional diffusion models.
\end{abstract}

\section{Introduction}

In recent years, the fields of machine learning and federated learning have garnered significant attention due to their ability to harness large-scale data for various applications. However, large-scale data often carries private information, such as faces, genders, and so on. One critical concern that has emerged is the privacy of the training data. Gradients, a fundamental component in the optimization process of machine learning models, typically shared among different parties in federated learning, inherently contain private information about the training data. Data reconstruction attacks \cite{b5,b10,b11}, in particular, aim to reconstruct the original training data from the gradients, posing a severe threat to data privacy. When it comes to private image reconstruction, attackers are highly motivated to illicitly acquire private images, especially high-resolution images, due to commercial interests or other reasons, as there are many situations in which high-resolution images are needed. For instance, in fields like radiology, high-resolution images can make the difference between missing and identifying critical patient conditions. Detailed images help in diagnosing diseases, planning treatments, and monitoring patient progress. These images are costly to access and always contain private information.

Existing gradient-based image reconstruction attacks are only effective for low-resolution images, with poor performance or very slow reconstruction speeds for high-resolution images. This is because current methods require repeated differentiation of randomly initialized images \cite{b5,b6}, and as the dimensionality of these images increases, the differentiation becomes more difficult, leading to greater reconstruction errors. Reconstruction methods based on Generative Adversarial Networks (GANs) \cite{b10,b11} or fine-tuned diffusion models \cite{b25,xue2024diffusion} introduce additional computational complexity and require certain prior conditions. For example, the distribution of the initial images must match that of the private images. 

Diffusion models \cite{b12,b13,b14,b15}, inspired by the physical process of diffusion, iteratively add noise to data, break up the original data into pure Gaussian noise, and learn to reverse this process by predicting the added Gaussian noise, thereby generating new data from a Gaussian random vector by denoising the added Gaussian noise step by step. Conditional diffusion models \cite{b16,b17,b18,b19,b20,b21,b22,b23,b24} extend this approach by conditioning the generation process on additional information, allowing for more controlled and targeted data generation. Due to the excellent image-generating ability of diffusion models, recent work \cite{b25} has demonstrated that private gradients can serve as the guidance for fine-tuning pre-trained diffusion models to generate private images and the adversary can obtain a gradient-based fine-tuned diffusion model to conduct an effective image reconstruction attack.

To mitigate the privacy risks associated with gradient leakage, one effective method is to add differentially private noise \cite{b28} to the original gradients such that the adversary could not infer private information from these noisy gradients \cite{b32,b5,b25,b26,b27}. Differential privacy \cite{b28} ensures that the added noise masks the private information, thereby reducing the risk of data reconstruction attacks.   

Based on the noising ability of differential privacy and the denoising ability of diffusion models, we are interested in the following issues: Given the capability of conditional diffusion models to generate high-resolution images, is it possible for an attacker to incorporate stolen or inadvertently leaked gradients into the conditional diffusion model as a guiding condition, thereby reconstructing the original image? Current reconstruction methods require certain priors. Could this prerequisite be eliminated in a conditional diffusion model without compromising the reconstruction quality? In other words, can a conditional diffusion model, starting from a randomly generated noise, utilize stolen or leaked gradients to reconstruct the original private image? Could diffusion models' denoising ability contribute to the data reconstruction attack even if the adversary captures the noisy gradients with injected differentially private noise? By analyzing the interplay between the noise introduced by differential privacy and the denoising ability of diffusion models, we can better understand their adversarial impacts.

In this paper, we delve into how to establish a gradient-guided conditional diffusion model to reconstruct private images and the adversarial impacts between differential privacy's noising mechanism and the denoising ability of diffusion models. We aim to provide a comprehensive analysis of how these two factors interact and influence the overall privacy of private data. Through this investigation, we seek to advance the understanding of differential privacy-preserving techniques in the context of protecting private gradients. In detail, we first discuss how the attacker obtains conditional diffusion models based on gradients to reconstruct private images if they capture the original or differentially private gradients. Then, we study the adversarial impacts of differential privacy's noising mechanism and the denoising ability of diffusion models by exploring how the differentially private noise scale influences the Jensen gap between the reconstructed and private images. This Jensen gap is highly correlated to the reconstruction ability of conditional diffusion models.

In summary, our contributions are:
\begin{itemize}
    \item We propose two methods for constructing conditional diffusion models using leaked or stolen gradients to reconstruct private images. These methods require only minimal modifications to a few steps in the diffusion model's generation process to reconstruct high-quality private images. Moreover, these methods do not require the attacker to have prior knowledge; as long as gradient information is captured, the attacker can reconstruct the private image starting from the initial noise image.
    \item We conduct a detailed theoretical analysis of these methods, illustrating the impact of differential privacy noise magnitude and the type of attacked model on the quality of reconstructed private images. The analysis reveals the relationship among the denoising process of diffusion models, the magnitude of differential privacy noise, and the attacker's reconstruction capability. It also explains why different attacked models exhibit varying levels of vulnerability to the same reconstruction attack.
    \item We conduct extensive experiments to demonstrate the excellent capability of conditional diffusion models in data reconstruction. Our experiments also validate the correctness of the theoretical analysis, suggesting new research directions for privacy risk auditing using conditional diffusion models.
\end{itemize}

\section{Related Work and Preliminary}

\subsection{Gradient Leakage and Image Reconstruction}\label{subsection2.1}

In deep learning, model training is executed in parallel across different nodes sometimes, and synchronization occurs through gradient exchange, for example, federated learning \cite{b1}. Recent studies on the risks associated with reconstructing private training images in distributed or federated scenarios reveal that adversaries can steal gradients by eavesdropping on communication channels \cite{wu2024concealing}, impersonating federated learning participants \cite{hitaj2017deep}, and so on. It is indeed possible to extract private training data from the shared or stolen gradients \cite{b2,b3,b4,b5,b6}. Namely, the adversary can recover the training inputs from these shared or stolen gradients, questioning the safety of gradient sharing in preserving data privacy. Usually, the attacker optimizes a dummy input to minimize the difference between its gradient in the attacked model and the leaked gradient \cite{b5,b6}. The optimization can be formulated as:
\begin{equation}
    \begin{split}
    \min _{\mathbf{x}^{\prime}} \mathcal{L}(\nabla F\left( \mathbf{x}^{\prime},W \right),\nabla F\left( \mathbf{x},W \right)) = \min _{\mathbf{x}^{\prime}}\|\nabla F\left( \mathbf{x}^{\prime},W \right) -\nabla F\left( \mathbf{x},W \right) \|^2
    \end{split}
\label{eq1}
\end{equation}
, where $\mathbf{x}^{\prime}$ is the dummy input initialized randomly, $\nabla F\left( \mathbf{x}^{\prime}, W \right)$ is the gradient of the loss with respect to the image $\mathbf{x}^{\prime}$ for the current model weights $W$, and $\nabla F\left( \mathbf{x}, W \right)$ is the leaked gradient that the attacker obtains. $F$ is the loss function of the attacked model and $\mathcal{L}(\nabla F\left( \mathbf{x}^{\prime},W \right),\nabla F\left( \mathbf{x},W \right))$ is the reconstruction loss. The reconstruction loss adopts the mean square error (MSE) loss in Eq.(\ref{eq1}).

In the realm of reconstructing training inputs with shared gradients based on the reconstruction loss, various advanced methodologies \cite{ig,gi,gias,ggl,gifd} have emerged. \cite{ig} achieves efficient image recovery through a meticulously designed loss function. \cite{gi,gias,ggl} propose the utilization of Generative Adversarial Networks (GANs) acting as prior information, which offers an effective approximation of natural image spaces, significantly enhancing attack methodologies. \cite{gifd} exhibits outstanding generalization capabilities in practical settings, underscoring its adaptability and wider applicability across various scenarios. \cite{b25} reconstructs high-resolution images accurately by fine-tuning diffusion model, but adds computational complexity and requires specific prior conditions.

\subsection{Diffusion Models and Conditional Diffusion Models}\label{subsection2.2}

Diffusion models \cite{b13,b29,b30,b31}, inspired by the physical process of diffusion, which describes the movement of particles from regions of higher concentration to lower concentration, are a class of generative models in machine learning that have gained significant attention due to their ability to produce high-quality, diverse samples. Diffusion models work by gradually adding noise to data over a series of steps, transforming the data into a pure noise distribution. This process is then reversed to generate data samples from the noise. The two key processes involved are the forward diffusion process and the reverse generation process. 

Specifically, the stochastic differential equation (SDE) for the data noising process $\mathbf{x}_t, t \in [0,T]$ in the following form \cite{b30}:
\begin{equation}
    \text{d}\mathbf{x} = -\frac{\beta(t)}{2}\mathbf{x} \text{d}t + \sqrt{\beta(t)} \text{d}\mathbf{w}
    \label{forwardSDE}
\end{equation}
, where $\beta(t): \mathbb{R} \mapsto \mathbb{R}>0$ is the continuous noise schedule of the process, typically taken to be a monotonically increasing linear function of $t$ \cite{b13}, and $\mathbf{w}$ is the standard Wiener process. The data distribution is defined as $\mathbf{x}_0 \sim p_{\text{data}}$. A simple, tractable distribution (e.g. isotropic Gaussian) is achieved when $t = T$, i.e. $\mathbf{x}_T \sim \mathcal{N}(0,I)$. To recover the data-generating distribution starting from the tractable distribution, it can be achieved via the corresponding reverse SDE of Eq.(\ref{reverseSDE}):
\begin{equation}
    \text{d}\mathbf{x} =\left( -\frac{\beta(t)}{2}\mathbf{x} - \beta(t) \nabla_{\mathbf{x}_t} \log p_t(\mathbf{x}_t) \right)\text{d}t + \sqrt{\beta(t)} \text{d}\mathbf{\bar{w}}
    \label{reverseSDE}
\end{equation}
, where $\text{d}t$ corresponds to time running backward and $\text{d}\mathbf{\bar{w}}$ is the standard Wiener process running
backward. The time-dependent drift function $\nabla_{\mathbf{x}_t} \log p_t(\mathbf{x}_t)$ is closely related to the conditional diffusion models which will be discussed later.

In the discrete settings with $N$ bins, we define $\mathbf{x}_t = \mathbf{x}(\frac{tT}{N}), \beta_t = \beta(\frac{tT}{N})$ and we can get the discrete version of the forward and corresponding reverse SDE, which hints the Denoising Diffusion Probabilistic Models (DDPM) \cite{b13}. If $\mathbf{x}_0 \sim p_{\text{data}}$ represents the original data, the process generates a sequence $\{ \mathbf{x}_1, \mathbf{x}_2, \ldots, \mathbf{x}_T \}$ where each $\mathbf{x}_t$ is a noisier version of $\mathbf{x}_{t-1}$, and $\mathbf{x}_T$ is indistinguishable from pure noise. This can be mathematically represented as \cite{b13,b29}:
\begin{equation}
\mathbf{x}_t=\sqrt{\alpha_t}\mathbf{x}_{0}+\sqrt{1-\alpha_t}\epsilon _t, \epsilon_t \sim \mathcal{N}\left( 0,I \right)
\label{forward}
\end{equation}
, where $\alpha_t: = \prod_{s=1}^t \left(1 - \beta_s \right)$, $\{\beta_t \}_{t=1}^{T}$ are fixed variance schedules and $\epsilon_t$ is the Gaussian noise from a standard normal distribution.

The reverse process is a denoising process. It involves learning a parameterized model $p_{\theta}\left( \mathbf{x}_{t-1}|\mathbf{x}_t \right)$ to reverse the forward steps. It is modeled by a neural network trained to predict the noise $\epsilon_t$ that is added at each step in the forward process. The sampling methods of different diffusion models differ. In DDPM, the reverse process is a stochastic Markovian process, which is formulated as \cite{b13,b29}:
\begin{equation}
\mathbf{x}_{t-1}=\frac{1}{\sqrt{1- \beta_t}}\left( \mathbf{x}_t-\frac{\beta_t}{\sqrt{1-\alpha_{t}}}\epsilon _{\theta}\left( \mathbf{x}_t,t \right) \right) + \sigma_t \mathbf{z}
\label{eq2}
\end{equation}
, where $\epsilon _{\theta}\left( \mathbf{x}_t,t \right)$ is a prediction of $\epsilon_t$ parameterized by $\theta$, and $\mathbf{z} \sim \mathcal{N}\left( 0,I \right)$. Meanwhile, Denoising Diffusion Implicit Model (DDIM) \cite{b30} is proposed as an alternative non-Markovian denoising process that has a distinct sampling process as follows \cite{b30}:
\begin{equation}
    \begin{split}
    \mathbf{x}_{t-1}=\sqrt{\alpha_{t-1}} f_{\theta}\left(\mathbf{x}_{t}, t\right)+\sqrt{1-\alpha_{t-1}-\sigma_{t}^{2}} \epsilon_{\theta}\left(\mathbf{x}_{t}, t\right)+\sigma_{t}^{2} \mathbf{z}
    \end{split}
\label{DDIM_1}
\end{equation}
, where $\mathbf{z} \sim \mathcal{N}\left( 0,I \right)$, and $f_{\theta}\left(\mathbf{x}_{t}, t\right)$ is the prediction of $\mathbf{x}_{0}$ at $t$ given $\mathbf{x}_{t}$ and $\epsilon_{\theta}\left(\mathbf{x}_{t}, t\right)$:
\begin{equation}
f_{\theta}\left(\mathbf{x}_{t}, t\right):=\frac{\mathbf{x}_{t}-\sqrt{1-\alpha_{t}} \epsilon_{\theta}\left(\mathbf{x}_{t}, t\right)}{\sqrt{\alpha_{t}}}.
\label{DDIM_2}
\end{equation}
Eq.(\ref{DDIM_2}) is derived from Eq.(\ref{forward}). In DDIM, the reverse diffusion process is deterministic when $\sigma_{t}=0$, resulting in its ability to generate images stably.

\begin{figure*}
  \centering
  \includegraphics[scale=0.6]{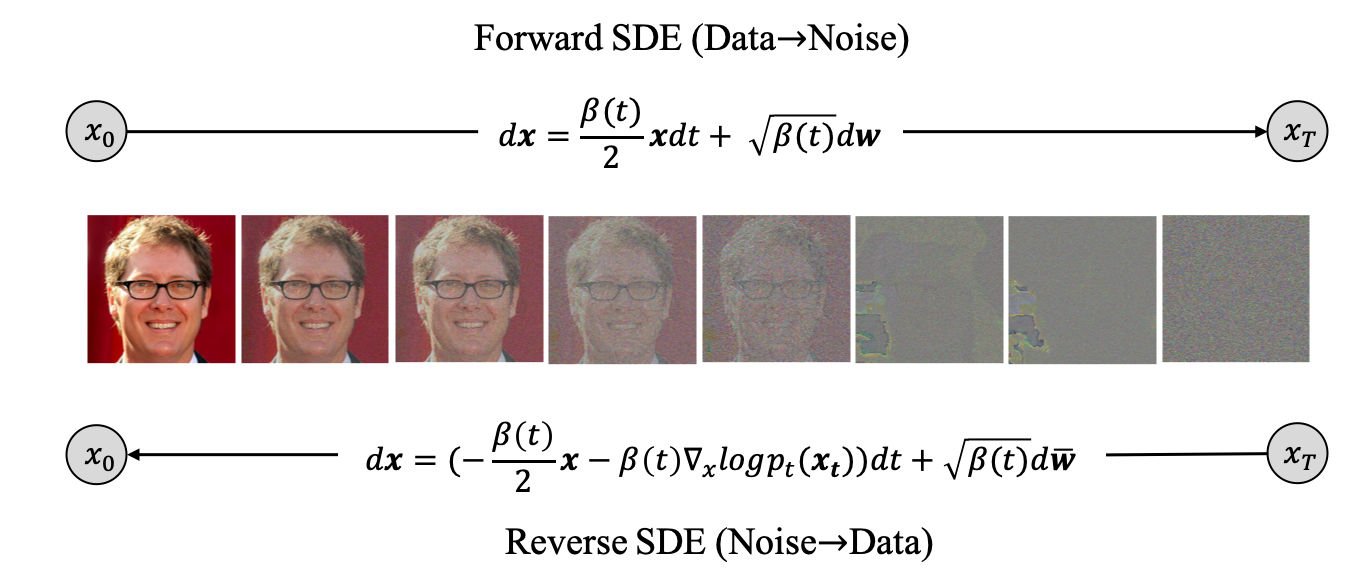}
  \caption{Forward and reverse process of diffusion models}
  \label{fig1_main}
\end{figure*}

Training diffusion models involves optimizing the parameters $\theta$ of the reverse process, which tries to minimize the difference between the noise added in the forward step $t$ and the predicted noise in the reverse step $t$, in simple terms, makes the generated image similar to the input one. The optimization objective can be expressed as \cite{b13,b29}:
\begin{equation}
\min _{\theta}\mathbb{E} _{t,\mathbf{x}_0,\epsilon _t}\left[ ||\epsilon _t-\epsilon _{\theta}\left( \mathbf{x}_t,t \right) ||^2 \right].
\label{eq3}
\end{equation}

Conditional diffusion models \cite{b16,b17,b18,b22,b23,b24} introduce the given condition or measurement $\mathbf{y}$ with an additional likelihood term $p(\mathbf{x}_t|\mathbf{y})$. Considering the Bayes rule,
\begin{equation}
    \nabla_{\mathbf{x}_t} \log p(\mathbf{x}_t|\mathbf{y}) = \nabla_{\mathbf{x}_t} \log p(\mathbf{x}_t) + \nabla_{\mathbf{x}_t} \log p(\mathbf{y} | \mathbf{x}_t).
\end{equation}
Leveraging the diffusion model as the prior whose reverse SDE is expressed as Eq.(\ref{reverseSDE}), it is straightforward to obtain the reverse SDE of the conditional diffusion models, namely the reverse diffusion sampler for sampling from the posterior distribution:
\begin{equation}
    \text{d}\mathbf{x} =\left( -\frac{\beta(t)}{2}\mathbf{x} - \beta(t) \left( \nabla_{\mathbf{x}_t} \log p(\mathbf{x}_t) + \nabla_{\mathbf{x}_t} \log p(\mathbf{y} | \mathbf{x}_t) \right) \right)\text{d}t + \sqrt{\beta(t)} \text{d}\mathbf{\bar{w}}
    \label{reverseSDE2}
\end{equation}
, where $p(\mathbf{x}_t)$ is the prior that is determined by $\epsilon_{\theta}\left(\mathbf{x}_{t}, t\right)$. For various scientific problems, we have a partial measurement $\mathbf{y}$ that is derived from $\mathbf{x}_{t}$, and the distribution is denoted as $p(\mathbf{y} | \mathbf{x}_t)$.

\subsection{Differential Privacy}\label{subsection2.3}
Differential privacy \cite{b28} is a mathematical framework designed to provide strong privacy guarantees for individual data entries in a dataset. It achieves this by ensuring that the output of a computation does not significantly change when any single data entry is modified. This ensures that sensitive information about individuals remains protected, even when an adversary has access to the output of the computation.

\begin{definition}[\textbf{Differential privacy} \cite{b28}]
A randomized mechanism $\mathcal{M}$ satisfies $(\varepsilon, \delta)$-differential privacy (DP) if for any two adjacent datasets $\mathcal{D}$ and $\mathcal{D}^{\prime}$ that differ by a single individual's data, and for all $S \subseteq \text{Range}(\mathcal{M}) $(the set of all possible outputs of $\mathcal{M}$), the following inequality holds:
    $\Pr[\mathcal{M}(\mathcal{D}) \in S] \leq e^\varepsilon \cdot \Pr[\mathcal{M}(\mathcal{D}^{\prime}) \in S] + \delta$.
\end{definition}

In the context of machine learning, gradients are computed during the training process to update the model's parameters. To protect these gradients, differentially private noise (such as Gaussian noise) is added to the gradients before they are used for the parameter updates \cite{b32,b5,b25,b26,b27}. The method is well known as Differentially Private Stochastic Gradient Descent (DP-SGD). The typical four steps undergo: (1) Compute the gradient of the loss function with respect to model parameters; (2) Clip the gradient to limit its sensitivity; (3) Add Gaussian noise to the clipped gradient; (4) Update the model parameters using the noisy gradient. Formally, for a gradient $g$ with sensitivity $\Delta_g$, the DP-SGD update model's parameters with noisy gradients $g^{\prime} = g + \mathcal{N}(0,\sigma^2 I)$, where $\sigma$ is chosen via Lemma \ref{gaussian} according to the Gaussian mechanism formula to ensure $(\epsilon, \delta)$-DP.

\begin{lemma}[\textbf{Gaussian Mechanism for Differential Privacy} \cite{b28}]
Let $ \mathcal{M}: \mathcal{D} \rightarrow \mathbb{R}^k $ be a function with  $\ell_2$-sensitivity $\Delta_2 \mathcal{M} = \| \mathcal{M}(D) - \mathcal{M}(D^{\prime}) \|$ which measures the maximum change in the Euclidean norm of $\mathcal{M}$ for any two adjacent datasets $D$ and $D^{\prime}$ that differ by a single individual's data. The Gaussian Mechanism adds noise drawn from $\mathcal{N}(0, \sigma^2 I)$ to the output of $\mathcal{M}$ and provides $(\varepsilon, \delta)$-differential privacy if $\sigma \geq \frac{\Delta_2 \mathcal{M} \cdot \sqrt{2 \log(1.25 / \delta)}}{\varepsilon}$.
\label{gaussian}
\end{lemma}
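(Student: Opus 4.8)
The plan is to run the classical privacy-loss analysis of the Gaussian mechanism. Identify $\mathcal{M}$ with its noiseless query $f:\mathcal{D}\to\mathbb{R}^k$ plus additive noise, so that $\mathcal{M}(D)=f(D)+\eta$ with $\eta\sim\mathcal{N}(0,\sigma^2 I)$. The first step is to invoke the standard reduction: a mechanism is $(\varepsilon,\delta)$-DP whenever, for \emph{every ordered pair} of adjacent datasets $D,D'$, the \emph{privacy loss} random variable $L(x)=\ln\frac{p_{\mathcal{M}(D)}(x)}{p_{\mathcal{M}(D')}(x)}$ satisfies $\Pr_{x\sim\mathcal{M}(D)}[\,L(x)\ge\varepsilon\,]\le\delta$. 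This turns the set-based definition (the inequality quantified over all $S\subseteq\mathrm{Range}(\mathcal{M})$) into a single one-dimensional tail estimate on $L$, and quantifying over ordered pairs automatically covers both directions of the DP inequality.

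Second, I would exploit the spherical symmetry of the isotropic Gaussian to reduce to one dimension. Let $v=f(D)-f(D')$, so $\|v\|\le\Delta_2\mathcal{M}=:\Delta$. Substituting the two Gaussian densities and expanding the squared norms with $x=f(D)+\eta$ gives the explicit identity $L(x)=\frac{\langle\eta,v\rangle}{\sigma^2}+\frac{\|v\|^2}{2\sigma^2}$. Only the component of $\eta$ along $v$ enters, so $\langle\eta,v\rangle\sim\mathcal{N}(0,\sigma^2\|v\|^2)$ and hence $L\sim\mathcal{N}\!\big(\frac{\|v\|^2}{2\sigma^2},\frac{\|v\|^2}{\sigma^2}\big)$. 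Taking the worst case $\|v\|=\Delta$ (which only widens the relevant tail), the goal reduces to a bound $\Pr[\,\eta_1\ge\frac{\sigma^2\varepsilon}{\Delta}-\frac{\Delta}{2}\,]\le\delta$ on a single coordinate $\eta_1\sim\mathcal{N}(0,\sigma^2)$.

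Third, I would apply the standard Gaussian tail bound $\Pr[\eta_1\ge t]\le\frac{\sigma}{t\sqrt{2\pi}}\exp(-t^2/(2\sigma^2))$ at the threshold $t=\frac{\sigma^2\varepsilon}{\Delta}-\frac{\Delta}{2}$, and verify that the choice $\sigma=\frac{\Delta\sqrt{2\ln(1.25/\delta)}}{\varepsilon}$ drives the right-hand side to at most $\delta$; monotonicity of the tail in $\sigma$ then extends the conclusion to all larger $\sigma$, giving the stated threshold.

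The main obstacle is precisely this last calibration: matching the generic tail estimate to the specific constant $1.25$ and the clean form $\sqrt{2\ln(1.25/\delta)}$ requires careful control of the lower-order terms, and the bound is known to hold only in the regime $\varepsilon\le 1$. I would use $\varepsilon\le 1$ to absorb the shift $\frac{\Delta}{2}$ and to bound the polynomial prefactor $\frac{\sigma}{t\sqrt{2\pi}}$ by a constant, then reduce the remaining exponential inequality to a quadratic condition in $\ln(1/\delta)$ whose solution yields exactly the claimed lower bound on $\sigma$.
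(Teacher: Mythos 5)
The paper offers no proof of this lemma: it is quoted verbatim as a known result, with the citation to Dwork--Roth standing in for the argument. Your proposal is essentially that reference's proof (Theorem A.1 there), and it is sound: the reduction from the set-quantified DP definition to a tail bound on the privacy loss $L$ is the standard splitting of $S$ into $\{x: L(x)\le\varepsilon\}$ and its complement; your identity $L=\frac{\langle\eta,v\rangle}{\sigma^2}+\frac{\|v\|^2}{2\sigma^2}$ is correct, so $L$ is Gaussian with mean $\frac{\|v\|^2}{2\sigma^2}$ and variance $\frac{\|v\|^2}{\sigma^2}$; and since the threshold $t(\|v\|)=\frac{\sigma^2\varepsilon}{\|v\|}-\frac{\|v\|}{2}$ is strictly decreasing in $\|v\|$, taking $\|v\|=\Delta$ is indeed the worst case, as you claim. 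The Mill's-ratio tail bound plus the calibration $\sigma=\frac{\Delta\sqrt{2\ln(1.25/\delta)}}{\varepsilon}$ then closes the argument exactly as in the source.

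One point you raise deserves emphasis rather than apology: the restriction $\varepsilon\le 1$ is not a blemish in your write-up but a genuine hypothesis of the theorem that the paper's statement silently drops. The constant $1.25$ and the clean form $\sqrt{2\ln(1.25/\delta)}$ are obtained only after using $\varepsilon\le 1$ (or $\varepsilon<1$) to absorb the shift $\frac{\Delta}{2}$ and control the prefactor $\frac{\sigma}{t\sqrt{2\pi}}$; for $\varepsilon>1$ this calibration can fail and a different analysis (e.g., the analytic Gaussian mechanism) is needed. This is consequential here because the paper later invokes the lemma with $\varepsilon$ values such as $15.32$ and $484.48$ in its experiments, well outside the regime where the quoted noise formula is valid. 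So your proof is correct for the theorem as it actually holds; if anything, it exposes that the lemma as stated in the paper is missing the condition $\varepsilon\le 1$ under which the claimed threshold on $\sigma$ is justified.
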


\section{Reconstruction of Private Images}

\subsection{Gradients as Condition Measures}
Machine learning models, particularly deep learning models, are often susceptible to various types of attacks that can compromise the confidentiality of the training data. One such attack is the image reconstruction attack, which leverages leaked gradients during the training process to steal private images. When a model is trained using gradient-based optimization methods, such as stochastic gradient descent (SGD), the gradients of the loss function with respect to the model parameters are computed and used to update the parameters. These gradients inherently carry information about the training data. If an attacker gains access to these gradients, they can potentially reverse-engineer the images that were used to compute them.

Mathematically, consider a machine learning model $F(\mathbf{x}; W)$ parameterized by $W$. Given a private data $\mathbf{x} \in \mathbb{X}$, the original gradient $g_0(\mathbf{x})$ is given by:
\begin{equation}
    g_0(\mathbf{x}) = \nabla_W F(\mathbf{x}; W).
\label{sgd}
\end{equation} 
Assuming the attacker obtains $g_0(\mathbf{x})$ and has access to $F(\mathbf{x}; W)$ (i.e., the attacker can calculate the gradient of an arbitrary data by feeding $F(\mathbf{x}; W)$ with the data $\mathbf{x}$), the attacker can iteratively update a dummy image $\mathbf{x}^{\prime}$ which is randomly initialized to minimize the difference between the gradient of $\mathbf{x}^{\prime}$ and the leaked gradient $g_0(\mathbf{x})$, effectively reconstructing the original image $\mathbf{x}$. 

To mitigate the privacy risks posed by gradient leakage, differential privacy (DP) can be employed. DP ensures that the inclusion or exclusion of a single training sample does not significantly affect the model's output, thereby providing privacy guarantees for individual data record. In the context of protecting private gradients, DP can be implemented by adding noise to the gradients. In this paper, we consider the Gaussian mechanism, where differentially private noise is sampled from the Gaussian distribution. Specifically, the noisy gradient, denoted as ${g} = g_0(\mathbf{x}) + \mathcal{N}(0, \sigma^2 I)$, where $\mathcal{N}(0, \sigma^2 I)$ represents Gaussian noise, with $\sigma^2$ being the noise scale determined by the privacy parameters in DP. The Gaussian mechanism in DP has been demonstrated to be effective in mitigating the risk of image reconstruction attacks \cite{b5,b25,b26,b27}.

Existing work \cite{b25} has shown that a pre-trained diffusion model can be fine-tuned via gradient guidance, allowing the fine-tuned diffusion model to recover high-resolution private images. In our view, $\nabla_W F(\mathbf{x}; W)$ can be considered a mapping $\mathbf{x} \mapsto g$, and the leaked gradients ${g}$ can serve as the measurement or condition. In this section, we broaden the discussion to explore how an attacker might guide a pre-trained diffusion model $\epsilon_{\theta}(\mathbf{x})$ to generate images through a series of denoising reverse steps if they capture the gradients $g$ without fine-tuning diffusion model itself, which adds computational complexity. Specifically, we investigate how an attacker could establish a conditional diffusion model $\epsilon_{\theta}(\mathbf{x} | {g})$. Moreover, we examine the adversarial relationship between the data reconstruction capability of conditional diffusion models and the noise-adding ability of differential privacy. In other words, we analyze how the noise scale $\sigma^2$ affects the data reconstruction ability of diffusion models.

When establishing the conditional diffusion model, the attacker designs an attack loss function $\mathcal{L}_{\text{attack}}\left( g({\mathbf{x}}), g; \theta \right)$, where $\mathbf{x} \sim \epsilon_{\theta}(\mathbf{x} | {g})$, to measure the difference between the gradients $g({\mathbf{x}})$ and $g$. Typically, the attack loss is symmetric, meaning that $\mathcal{L}_{\text{attack}}\left( g({\mathbf{x}}), g; \theta \right) = \mathcal{L}_{\text{attack}}\left( g, g({\mathbf{x}}); \theta \right)$. The attacker's objective is to establish the conditional diffusion model $\epsilon_{\theta}(\mathbf{x} | {g})$ and generate $\mathbf{x} \sim \epsilon_{\theta}(\mathbf{x} | {g})$ that minimizes $\mathcal{L}_{\text{attack}}\left( g({\mathbf{x}}), g; \theta \right)$. For simplicity, $\mathcal{L}_{\text{attack}}\left( g({\mathbf{x}}), g; \theta \right)$, $\mathcal{L}\left( g({\mathbf{x}}) \right)$, and $\mathcal{L}\left( {\mathbf{x}} \right)$ are used interchangeably to represent the attack loss, which is a composite function. The properties of $\mathcal{L}\left( g({\mathbf{x}}) \right)$ impact the quality of image reconstruction. The adversarial relationship between the denoising ability of $\epsilon_{\theta}(\mathbf{x} | {g})$ and the noise-adding ability of differential privacy is linked to the properties of $\mathcal{L}\left( g({\mathbf{x}}) \right)$, which we discuss further in this paper.

\subsection{Adversarial Impacts of Differential Privacy and Conditional Denoising Diffusion Models} \label{sec:4.2}

In this section, we discuss how to establish the conditional diffusion attack model $\epsilon_{\theta}(\mathbf{x} | {g})$ and the impacts of the scale of differentially private noise on the reconstruction quality if $g$ is injected with differentially private Gaussian noise. When establishing the conditional diffusion attack model via Eq.(\ref{reverseSDE2}), it is difficult to derive $p(\mathbf{y} = g | \mathbf{x}_t)$ since $\mathbf{x}_t$ is time-dependent. However, a tractable approximation for $p(\mathbf{y} = g | \mathbf{x}_t)$ is not difficult to design. Given the posterior mean $\hat{\mathbf{x}}_{0} = f_{\theta}\left(\mathbf{x}_{t}, t\right)$ that can be effectively computed at the intermedia steps in Eq.(\ref{DDIM_2}), we use the following approximation: $P_{\theta}({g} | \mathbf{x}_{t}) \simeq P_{\theta}({g} | \hat{\mathbf{x}}_{0}) = P_{\theta}({g} | \hat{\mathbf{x}}_{0}(\mathbf{x}_{t}))$, which is related to Jensen’s inequality where Jensen gap quantifies the approximation error. 
\begin{definition}[Jensen Gap \cite{b19}]
    Let $\mathbf{x}$ be a random variable with distribution $p(\mathbf{x})$. For some function $f$ that may or may not be convex, the Jensen gap is defined as 
    \begin{equation}
        \mathcal{J}(f, \mathbf{x} \sim p(\mathbf{x})) = \mathbb{E}[f(\mathbf{x})] - f(\mathbb{E}[\mathbf{x}])
    \end{equation}
, where the expectation is taken over $p(\mathbf{x})$.
\end{definition}
The Jensen gap measures the approximation error introduced by the approximation $P_{\theta}({g} | \mathbf{x}_{t}) \simeq P_{\theta}({g} | \hat{\mathbf{x}}_{0}) = P_{\theta}({g} | \hat{\mathbf{x}}_{0}(\mathbf{x}_{t}))$. A smaller Jensen gap indicates a lower approximation error, which in turn suggests improved reconstruction quality of the conditional diffusion attack model $\epsilon_{\theta}(\mathbf{x} | {g})$.

\textbf{DPS-Based Conditional Diffusion Attack Model (DPS-based Method).} 
Under differential privacy, the noisy gradient $g$ follows a Gaussian distribution. Specifically, the gradient $g(\mathbf{x})$ with respect to the reconstructed image $\mathbf{x}$ should also adhere to a Gaussian distribution. For ease of analysis, we assume $g(\mathbf{x}) \sim \mathcal{N}(\Bar{g}_0(\mathbf{x}), \sigma^2 I)$. The attacker's objective is to identify an image $\mathbf{x}$ such that $\Bar{g}_0(\mathbf{x})$ approximates $g_0$. Given the assumption that $g(\mathbf{x}) \sim \mathcal{N}(\Bar{g}_0(\mathbf{x}), \sigma^2 I)$, we can derive the conditional diffusion model $\epsilon_{\theta}(\mathbf{x} | {g})$ based on Diffusion Posterior Sampling (DPS) \cite{b19}. Algorithm \ref{alg: DPS} outlines the reconstruction process, and Theorem \ref{UpperBoundJensen} provides the upper bound of the Jensen Gap. The key guidance is step 8 in Algorithm \ref{alg: DPS}.
\begin{equation}
    \mathbf{x}_{t-1} \leftarrow \mathbf{x}_{t-1}^{\prime} - \zeta_t \nabla_{\mathbf{x}_t}  \| g- g_0(\hat{\mathbf{x}}_0)\|
\label{eq-DPS}
\end{equation}
\begin{algorithm}[htb]
    \caption{DPS-Based Reconstruction Method}
    \label{alg: DPS}
    \KwIn{$N, {g}, \{\zeta_t\}_{t=1}^N, \{\tilde{\sigma}_t\}_{t=1}^N, F(\mathbf{x}; W)$}
     $\mathbf{x}_N \sim \mathcal{N}(0, I)$\\
    \For{$t=N; t--; t \geq 1$}{
         $\hat{s} \leftarrow s_{\theta}(\mathbf{x}_t, t)$\\
         $\hat{\mathbf{x}}_0 \leftarrow \frac{1}{\sqrt{\bar{\alpha}_t}} \left( \mathbf{x}_t - (1 - \bar{\alpha}_t) \hat{s} \right)$\\
         $\mathbf{z} \sim \mathcal{N}(0, I)$\\ 
         $\mathbf{x}_{t-1}^{\prime} \leftarrow \frac{\sqrt{\bar{\alpha}_t} (1-\bar{\alpha}_{t-1})}{1-\bar{\alpha}_{t-1}} \mathbf{x}_t + \frac{\sqrt{\bar{\alpha}_{t-1}} \beta_t}{1 - \bar{\alpha}_t} \hat{\mathbf{x}}_0 + \tilde{\sigma}_t z$\\
         $g_0(\hat{\mathbf{x}}_0) = \nabla_W F(\hat{\mathbf{x}}_0; W)$\\
         $\mathbf{x}_{t-1} \leftarrow \mathbf{x}_{t-1}^{\prime} - \zeta_t \nabla_{\mathbf{x}_t}  \| g- g_0(\hat{\mathbf{x}}_0)\|$}
    \Return $\hat{\mathbf{x}}_0$
\end{algorithm}

\begin{theorem}[Upper Bound of Jensen Gap of Reconstruction Error]
Considering a machine learning model $F(\mathbf{x}; W)$ that is equipped with differentially private Gaussian noise $\mathcal{N}(0, \sigma^2 I)$. Assuming the attacker captures the noisy gradient ${g} = g_0(\mathbf{x}_0) + \mathcal{N}(0, \sigma^2 I)$, where $g_0(\mathbf{x}_0) = \nabla_{W} F(\mathbf{x}_0;W)$ and $\mathbf{x}_0$ is some private image. The attacker establishes a conditional diffusion attack model $\epsilon_{\theta}(\mathbf{x} | {g})$ by Algorithm \ref{alg: DPS} under the assumption that $P_{\theta}({g} | \mathbf{x}_{t}) \simeq P_{\theta}({g} | \hat{\mathbf{x}}_{0}) = P_{\theta}({g} | \hat{\mathbf{x}}_{0}(\mathbf{x}_{t}))$ where $\hat{\mathbf{x}}_{0}(\mathbf{x}_{t})$ is the reconstructed version of ${\mathbf{x}}_{0}$. Under these conditions, the reconstruction error can be quantified by Jensen Gap, which is upper bounded by: 
\begin{equation}
\begin{split}
    &\mathcal{J}\left( \nabla_{W} F(\mathbf{x};W), P_{\theta}(\mathbf{x}_0 | \mathbf{x}_{t}) \right) \\
    &\leq \frac{d}{\sqrt{2\pi \sigma^{2}}} \| \nabla_{\mathbf{x}} \nabla_{W} F(\mathbf{x};W) \| \int \| \mathbf{x}_0 - \hat{\mathbf{x}}_0 \| dP_{\theta}(\mathbf{x}_0 | \mathbf{x}_{t}).
\end{split}
\end{equation}
\label{UpperBoundJensen}
\end{theorem}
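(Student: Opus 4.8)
The plan is to recognize this bound as an instance of the Diffusion Posterior Sampling Jensen-gap estimate \cite{b19}, specialized to the measurement operator $\mathcal{A}(\mathbf{x}) := \nabla_W F(\mathbf{x};W)$, so that the approximation error being controlled is really the gap of the Gaussian likelihood $p(g \mid \mathbf{x}) = \mathcal{N}\!\left(g; \nabla_W F(\mathbf{x};W), \sigma^2 I\right)$ induced by the DP noise. Writing $\hat{\mathbf{x}}_0 = \mathbb{E}[\mathbf{x}_0 \mid \mathbf{x}_t]$ for the posterior mean produced at step $t$ (the quantity $f_\theta(\mathbf{x}_t,t)$ of Eq.(\ref{DDIM_2})), I would first use $\int dP_\theta(\mathbf{x}_0 \mid \mathbf{x}_t) = 1$ to rewrite the Jensen gap as a single integral,
\begin{equation}
\mathcal{J} = \int \left[ p(g \mid \mathbf{x}_0) - p(g \mid \hat{\mathbf{x}}_0) \right] dP_\theta(\mathbf{x}_0 \mid \mathbf{x}_t),
\end{equation}
and then bound it by pushing the absolute value inside the integral.

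The second step is to control the integrand pointwise. Since $\mathbf{x} \mapsto p(g \mid \mathbf{x})$ is smooth whenever $\mathcal{A}$ is continuously differentiable, the mean value theorem gives $|p(g \mid \mathbf{x}_0) - p(g \mid \hat{\mathbf{x}}_0)| \le \left(\sup_{\xi} \|\nabla_{\mathbf{x}} p(g \mid \xi)\|\right) \|\mathbf{x}_0 - \hat{\mathbf{x}}_0\|$, which immediately isolates the factor $\int \|\mathbf{x}_0 - \hat{\mathbf{x}}_0\|\, dP_\theta$ appearing on the right-hand side. The remaining task is to show $\sup_\xi \|\nabla_{\mathbf{x}} p(g \mid \xi)\| \le \frac{d}{\sqrt{2\pi\sigma^2}} \|\nabla_{\mathbf{x}} \nabla_W F(\mathbf{x};W)\|$. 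By the chain rule, $\nabla_{\mathbf{x}} p(g \mid \mathbf{x}) = \frac{1}{\sigma^2} p(g \mid \mathbf{x})\, (\nabla_{\mathbf{x}} \mathcal{A}(\mathbf{x}))^{\top}(g - \mathcal{A}(\mathbf{x}))$, so the Jacobian $\nabla_{\mathbf{x}} \nabla_W F$ factors out, leaving a Gaussian-weighted residual term $\sigma^{-2} p(g \mid \mathbf{x}) \|g - \mathcal{A}(\mathbf{x})\|$ whose supremum over the admissible residual must then be taken.

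I expect this last maximization — extracting the clean constant $d/\sqrt{2\pi\sigma^2}$ together with the dimension factor — to be the main obstacle, since it requires the Gao et al.\ Jensen-gap machinery rather than a naive worst-case bound on the Gaussian density times its residual. Concretely, I would exploit the product form of $\mathcal{N}(\cdot; \cdot, \sigma^2 I)$ across the $d$ coordinates (which is where the factor $d$ enters) and optimize the scalar profile $r \mapsto r\, e^{-r^2/(2\sigma^2)}$ that governs each coordinate's contribution. Two supporting hypotheses should be made explicit for this to go through: that $\mathcal{A} = \nabla_W F$ is continuously differentiable with uniformly bounded Jacobian $\|\nabla_{\mathbf{x}} \nabla_W F(\mathbf{x};W)\|$ (so the Lipschitz constant is finite), and that $P_\theta(\mathbf{x}_0 \mid \mathbf{x}_t)$ has finite first absolute moment about $\hat{\mathbf{x}}_0$ (so the final integral is well defined). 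Assembling the pointwise Lipschitz estimate with the integral representation then yields the stated inequality.
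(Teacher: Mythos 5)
Your skeleton is essentially the paper's own route: the paper proves Theorem \ref{UpperBoundJensen} by a one-line appeal to Theorem 1 of \cite{b19}, specialized exactly as you propose (measurement operator $\mathcal{A}(\mathbf{x}) = \nabla_W F(\mathbf{x};W)$, Gaussian likelihood induced by the DP noise, Jensen gap as the error of the approximation $P_{\theta}(g \mid \mathbf{x}_t) \simeq P_{\theta}(g \mid \hat{\mathbf{x}}_0)$), obtaining first the intermediate bound with constant $\frac{d}{\sqrt{2\pi\sigma^2}}\exp\left(-\frac{1}{2\sigma^2}\right)$ — this Lipschitz constant of the isotropic Gaussian density is packaged in the paper as Lemma \ref{lemma3} — and then discarding the exponential factor via $\exp\left(-\frac{1}{2\sigma^2}\right) \le 1$ (stated in the paper as $\frac{\exp(-1/\sigma^2)}{\sigma^2} \le \frac{1}{\sigma^2}$). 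Your integral representation of the gap, the mean-value-theorem reduction, and the chain-rule factorization $\nabla_{\mathbf{x}} p(g \mid \mathbf{x}) = \frac{1}{\sigma^2}\, p(g \mid \mathbf{x})\, (\nabla_{\mathbf{x}}\mathcal{A}(\mathbf{x}))^{\top}(g - \mathcal{A}(\mathbf{x}))$ are a faithful unpacking of what that citation hides, and your two explicit regularity hypotheses (uniformly bounded mixed Jacobian, finite first posterior moment) are correct and implicit in the paper, which later takes $\|\nabla_{\mathbf{x}}\nabla_W F(\mathbf{x};W)\|$ to mean a maximum over $\mathbf{x}$ (the $RV$ metric).

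The one step that would fail as you describe it is the endgame you yourself flag as the main obstacle: extracting the clean constant $\frac{d}{\sqrt{2\pi\sigma^2}}$ by exactly optimizing the radial profile. The supremum of $r \mapsto \frac{r}{\sigma^2}(2\pi\sigma^2)^{-d/2} e^{-r^2/(2\sigma^2)}$ is attained at $r = \sigma$ with value $\frac{e^{-1/2}}{\sigma}(2\pi\sigma^2)^{-d/2}$, which has the wrong shape: it retains the full $d$-dimensional normalization $(2\pi\sigma^2)^{-d/2}$ and an extra $\sigma^{-1}$, no factor $d$ emerges from a single radial maximization, and for small $\sigma$ this exact supremum \emph{exceeds} $\frac{d}{\sqrt{2\pi\sigma^2}}$, so no honest worst-case bound of that form closes the proof. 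Note also that the factor $e^{-1/(2\sigma^2)}$ appearing in Lemma \ref{lemma3} is not the profile's value at its maximizer (that would be $e^{-1/2}$) but corresponds to evaluating at unit residual $r=1$ in the coordinate-wise argument. The repair is simply to follow the paper's two-step structure rather than re-deriving the constant: import the coordinate-wise Lipschitz constant $L = \frac{d}{\sqrt{2\pi\sigma^2}}\exp\left(-\frac{1}{2\sigma^2}\right)$ of Lemma \ref{lemma3} wholesale (the factor $d$ arising from the product/telescoping structure over coordinates, as you correctly anticipate), insert it into your pointwise estimate, and only then relax $\exp\left(-\frac{1}{2\sigma^2}\right) \le 1$ to reach the stated bound.
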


\begin{proof}
Analogous to the proof of Theorem 1 in \cite{b19}, the upper bound of Jensen Gap is:
\begin{equation}
\begin{split}
    \mathcal{J}\left( \nabla_{W} F(\mathbf{x};W), P_{\theta}(\mathbf{x}_0 | \mathbf{x}_{t}) \right) & \leq \frac{d}{\sqrt{2\pi \sigma^{2}}} \exp(- \frac{1}{2\sigma^2}) \| \nabla_{\mathbf{x}} \nabla_{W} F(\mathbf{x};W) \| \int \| \mathbf{x}_0 - \hat{\mathbf{x}}_0 \| dP_{\theta}(\mathbf{x}_0 | \mathbf{x}_{t})\\
    & \leq \frac{d}{\sqrt{2\pi \sigma^{2}}} \| \nabla_{\mathbf{x}} \nabla_{W} F(\mathbf{x};W) \| \int \| \mathbf{x}_0 - \hat{\mathbf{x}}_0 \| dP_{\theta}(\mathbf{x}_0 | \mathbf{x}_{t})
\end{split}
\end{equation}

The last inequality is because $\frac{\exp(-\frac{1}{\sigma^2})}{\sigma^2} \leq \frac{1}{\sigma^2}$.
\end{proof}

The upper bound indicates the worst reconstruction quality. The term $\int \| \mathbf{x}_0 - \hat{\mathbf{x}}_0 \| dP_{\theta}(\mathbf{x}_0 | \mathbf{x}_{t})$ is a number determined by the pre-trained diffusion model, the adopted denoising style (e.g. DDPM or DDIM) and the attacked model. This term struggles with the noise scale $\sigma^2$. The interesting thing is that we find a key factor in the upper bound of Jensen gap in Theorem \ref{UpperBoundJensen}, $\| \nabla_{\mathbf{x}} \nabla_{W} F(\mathbf{x}; W) \| = \max_{\mathbf{x}} \| \nabla_{\mathbf{x}} \nabla_{W} F(\mathbf{x}; W) \|$ that is relevant to the attacked model. In other words, the vulnerability of different machine learning models is not the same under the same noise scale and the same adopted pre-trained diffusion model. $\| \nabla_{\mathbf{x}} \nabla_{W} F(\mathbf{x};W) \|$ can be utilized as a metric to evaluate the vulnerability. Here we choose the Frobenius norm and $\| \nabla_{\mathbf{x}} \nabla_{W} F(\mathbf{x};W) \|$ can be calculated as a metric to display the vulnerability of $F(\mathbf{x}; W)$. We give a formal definition.

\begin{definition}[Reconstruction Vulnerability of Machine Learning Models]
    Given a machine learning model $F(\mathbf{x};W)$ parameterized by $W$ where $\mathbf{x}$ is the input, \textit{reconstruction vulnerability} $RV$ of the model $F(\mathbf{x};W)$ with respect to the private dataset $\mathbb{X}$ is 
    \begin{equation}
        RV = \max_{\mathbf{x} \in \mathbb{X}} \| \nabla_{\mathbf{x}} \nabla_{W} F(\mathbf{x}; W) \|.
    \end{equation}
\end{definition}

We want to discover whether the vulnerability of $F(\mathbf{x}; W)$ has a relationship with the value of $RV$. When computing $RV$, it is computationally intensive if the input dimension is high. We consider projecting $\nabla_{W} F(\mathbf{x};W)$ onto some random direction $\mathbf{v}$ and then estimating $\| \nabla_{\mathbf{x}} \left( \mathbf{v}^{T} \nabla_{W} F(\mathbf{x};W) \right)\|$ along the direction of $\mathbf{v}$. Noting that $\mathbf{v}^{T} \nabla_{W} F(\mathbf{x};W)$ is a number, $\| \nabla_{\mathbf{x}} \left( \mathbf{v}^{T} \nabla_{W} F(\mathbf{x};W) \right)\|$ can be calculated efficiently by automatic differentiation. To obtain more accurate estimation, we can estimate $\| \nabla_{\mathbf{x}} \nabla_{W} F(\mathbf{x};W) \|$ given $\mathbf{x}$ by projecting $\nabla_{W} F(\mathbf{x};W)$ onto some random orthogonal directions $\{ \mathbf{v}_j \}_{j=1}^M$. The larger $M$ will show more details. For $F(\mathbf{x}; W)$ trained with $\{ \mathbf{x}_i \}_{i=1}^N$, we can estimate $RV$ by:
\begin{equation}
\begin{split}
    RV &= \mathbb{E}_{\mathbf{v} \sim \mathbf{p}_{\mathbf{v}}} \mathbb{E}_{\mathbf{x} \sim \mathbf{p}_{\text{data}}(\mathbf{x})} \| \nabla_{\mathbf{x}} \left( \mathbf{v}^{T} \nabla_{W} F(\mathbf{x};W) \right)\| \\
    & \simeq \frac{1}{N} \frac{1}{M} \sum_{i=1}^{N} \sum_{j=1}^{M} \| \nabla_{\mathbf{x}} \left( \mathbf{v}_{ij}^{T} \nabla_{W} F(\mathbf{x}_i;W) \right)\|
\end{split}
\end{equation}
, where $\mathbf{p}_{\text{data}}(\mathbf{x})$ is the distribution of the private dataset $\mathbb{X}$. In the experimental section, we will thoroughly discuss the relationship between the vulnerability of different attacked models and the value of $RV$. This analysis provides insight into why different models exhibit varying levels of resistance when subjected to the same reconstruction attack. And the values of $RV$ in the experiments are obtained when $M = 1000$ and $N = 310$.

Subsequently, we discuss the lower bound of the Jensen gap to see the impact of the noise scale on the best reconstruction quality. For step 8 in Algorithm \ref{alg: DPS}, the component $\| g- g_0(\hat{\mathbf{x}}_0)\|$ can be further generalized as an instance of the reconstruction attack loss $\mathcal{L}(g(\mathbf{x}))$. The key step 8 in Algorithm \ref{alg: DPS} can be rewritten as:
\begin{equation}
    \mathbf{x}_{t-1} \leftarrow \mathbf{x}_{t-1}^{\prime} - \zeta_t \nabla_{\mathbf{x}_t}  \mathcal{L}(g(\mathbf{x}_t)).  
\label{eq-DPS2}
\end{equation}

The properties of $\mathcal{L}(g(\mathbf{x}))$ have impacts on the lower bound of Jensen Gap. We derive the lower bound of Jensen Gap under the differentially private Gaussian noise and reveal how Jensen Gap is related to the noise scale. The following Assumptions \ref{assu1}-\ref{assu4} and Lemmas \ref{lemma2}-\ref{lemma3} are needed to derive the lower bound of Jensen Gap under differential privacy.

\begin{assumption}
    $\mathcal{L}$ is $\beta$ convex loss with respect to $g(\mathbf{x})$;
    \label{assu1}
\end{assumption}

\begin{assumption}
    $\mathcal{L}$ is $\alpha$ smooth with respect to $g(\mathbf{x})$;\label{assu2}
\end{assumption}

\begin{assumption}
    $g(\mathbf{x})$ is $L_g$ smooth with respect to $\mathbf{x}$;
    \label{assu3}
\end{assumption}

\begin{assumption}
    The Jacobian matrix $\mathbf{J}_g(\mathbf{x}) = \nabla_{\mathbf{x}} g(\mathbf{x})$ is full rank.
    \label{assu4}
\end{assumption}

\begin{lemma}
    Under the assumption of \textbf{Theorem \ref{UpperBoundJensen}}, if the loss $\mathcal{L}\left( g(\mathbf{x}) \right)$ is $\beta^{\prime}$-strongly convex with respect to $\mathbf{x}$ and $P_{\theta}(\mathbf{x}_0 | \mathbf{x}_{t})$ is a Gaussian distribution with covariance matrix whose eigenvalues are $\{ \sigma_i^2 \}_{i=1}^n$. Then the lower bound of Jensen Gap is bounded by \cite{b19}:
\begin{equation}
    \mathcal{J}\left( \mathcal{L}\left( g(\mathbf{x}) \right), P_{\theta}(\mathbf{x}_0 | \mathbf{x}_{t}) \right) \geq \frac{1}{2} \beta^{\prime} \sum_{i=1}^{n} \sigma_{i}^{2}.
\end{equation}
\label{lemma2}
\end{lemma}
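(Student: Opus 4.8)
The plan is to prove this directly from the definition of strong convexity, treating the composite loss $f := \mathcal{L} \circ g$ as the function appearing in the Jensen gap. Let $\mathbf{x}$ denote the random variable distributed according to $P_{\theta}(\mathbf{x}_0 \mid \mathbf{x}_{t})$ and write $\mu = \mathbb{E}[\mathbf{x}]$ for its mean. Since $f$ is $\beta^{\prime}$-strongly convex in $\mathbf{x}$, it satisfies the quadratic lower bound
\begin{equation}
    f(\mathbf{x}) \geq f(\mu) + \nabla f(\mu)^{T}(\mathbf{x} - \mu) + \frac{\beta^{\prime}}{2}\|\mathbf{x} - \mu\|^2
\end{equation}
pointwise for every realization of $\mathbf{x}$.

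The second step is to take the expectation of both sides over $P_{\theta}(\mathbf{x}_0 \mid \mathbf{x}_{t})$. The linear term has expectation zero because $\mathbb{E}[\mathbf{x} - \mu] = 0$ by the definition of $\mu$, which leaves
\begin{equation}
    \mathbb{E}[f(\mathbf{x})] - f(\mu) \geq \frac{\beta^{\prime}}{2}\, \mathbb{E}\left[ \|\mathbf{x} - \mu\|^2 \right].
\end{equation}
The left-hand side is precisely the Jensen gap $\mathcal{J}\left( \mathcal{L}(g(\mathbf{x})), P_{\theta}(\mathbf{x}_0 \mid \mathbf{x}_{t}) \right)$ by the definition of the Jensen gap, so it remains only to identify the right-hand side.

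The final step is to recognize that $\mathbb{E}\left[ \|\mathbf{x} - \mu\|^2 \right]$ is the trace of the covariance matrix $\Sigma$ of $P_{\theta}(\mathbf{x}_0 \mid \mathbf{x}_{t})$, obtained by expanding the squared norm coordinatewise so that the $i$-th summand is the variance of the $i$-th component. Since the trace of a symmetric matrix equals the sum of its eigenvalues, and the eigenvalues of $\Sigma$ are $\{ \sigma_i^2 \}_{i=1}^{n}$ by hypothesis, one has $\mathbb{E}\left[ \|\mathbf{x} - \mu\|^2 \right] = \mathrm{tr}(\Sigma) = \sum_{i=1}^{n} \sigma_i^2$. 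Substituting yields the claimed lower bound $\mathcal{J} \geq \frac{1}{2} \beta^{\prime} \sum_{i=1}^{n} \sigma_i^2$.

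I do not expect a genuine obstacle here: the argument is a direct consequence of strong convexity, and the Gaussian assumption enters only through the clean characterization of the second moment via the covariance eigenvalues, so the same bound would in fact hold for any distribution sharing that trace. The one point meriting care is the passage from strong convexity of $\mathcal{L}$ in $g(\mathbf{x})$ (Assumption \ref{assu1}) to $\beta^{\prime}$-strong convexity of the composite in $\mathbf{x}$; this is where Assumptions \ref{assu3}--\ref{assu4} (smoothness and full-rank Jacobian of $g$) would be invoked to relate $\beta^{\prime}$ to $\beta$ and the smallest singular value of $\mathbf{J}_g$, though the present lemma takes the $\beta^{\prime}$-strong convexity of the composite as a given hypothesis rather than deriving it.
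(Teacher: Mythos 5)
Your proof is correct. Note that the paper itself states this lemma without proof, quoting it directly from \cite{b19}, so there is no internal derivation to compare against; your argument---minorize $f=\mathcal{L}\circ g$ at the mean $\mu$ via the strong-convexity inequality, take expectations so the linear term vanishes, and identify $\mathbb{E}\left[\|\mathbf{x}-\mu\|^2\right]$ with $\mathrm{tr}(\Sigma)=\sum_{i=1}^{n}\sigma_i^2$---is the standard one and is precisely how the cited source obtains this bound. Your two side remarks are also on target: Gaussianity is superfluous here beyond fixing the trace of the covariance, and the lemma indeed takes $\beta'$-strong convexity of the composite as a hypothesis, with the transfer from Assumptions \ref{assu1}--\ref{assu4} to a concrete $\beta'$ carried out separately in the proof of Theorem \ref{lowerboundDSG}.
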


\begin{lemma}
    Let $\phi(\cdot)$ be an isotropic multivariate Gaussian density function with variance matrix $\sigma^2 I$. There exists a constant $L$ such that $\forall \mathbf{x}, \mathbf{y} \in \mathbb{R}^d$,
    \begin{equation}
        \| \phi(\mathbf{x}) - \phi(\mathbf{y}) \| \leq L \| \mathbf{x} - \mathbf{y} \|,
    \end{equation}
    where $L = \frac{d}{\sqrt{2 \pi \sigma^2}} \exp(-\frac{1}{2\sigma^2})$.
    \label{lemma3}
\end{lemma}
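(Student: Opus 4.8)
The statement is a Lipschitz estimate for a scalar-valued, everywhere-smooth function, so the natural plan is to control it through a uniform bound on the gradient. The first thing I would note is that, because $\phi$ is real-valued, $\|\phi(\mathbf{x})-\phi(\mathbf{y})\|$ is simply $|\phi(\mathbf{x})-\phi(\mathbf{y})|$. Parametrizing the segment joining the two points as $\mathbf{z}(s)=\mathbf{y}+s(\mathbf{x}-\mathbf{y})$ for $s\in[0,1]$ and applying the fundamental theorem of calculus gives $\phi(\mathbf{x})-\phi(\mathbf{y})=\int_0^1\langle\nabla\phi(\mathbf{z}(s)),\,\mathbf{x}-\mathbf{y}\rangle\,ds$; Cauchy--Schwarz then yields $|\phi(\mathbf{x})-\phi(\mathbf{y})|\le\big(\sup_{\mathbf{z}\in\mathbb{R}^d}\|\nabla\phi(\mathbf{z})\|\big)\,\|\mathbf{x}-\mathbf{y}\|$. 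The entire lemma therefore reduces to showing that $\sup_{\mathbf{z}}\|\nabla\phi(\mathbf{z})\|$ is bounded by the claimed constant $L$.

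The second step is to compute this supremum. Differentiating $\phi(\mathbf{z})=(2\pi\sigma^2)^{-d/2}\exp(-\|\mathbf{z}\|^2/2\sigma^2)$ gives $\nabla\phi(\mathbf{z})=-\sigma^{-2}\mathbf{z}\,\phi(\mathbf{z})$, hence $\|\nabla\phi(\mathbf{z})\|=\sigma^{-2}\|\mathbf{z}\|\,\phi(\mathbf{z})$, which depends on $\mathbf{z}$ only through $r=\|\mathbf{z}\|$. The problem thus collapses to maximizing the one-dimensional profile $g(r)=\sigma^{-2}\,r\,(2\pi\sigma^2)^{-d/2}\exp(-r^2/2\sigma^2)$ over $r\ge 0$. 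Since $g(0)=0$, $g(r)\to 0$ as $r\to\infty$, and $g\ge 0$, the unique interior critical point $r=\sigma$ is the global maximizer, and evaluating $g(\sigma)$ produces a closed-form Lipschitz constant.

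The step I expect to be the main obstacle is reconciling this exact maximum with the precise constant $L=\frac{d}{\sqrt{2\pi\sigma^2}}\exp(-\frac{1}{2\sigma^2})$ stated in the lemma: the clean Euclidean maximization above yields a factor $e^{-1/2}$ together with the full $(2\pi\sigma^2)^{-d/2}$ normalization, whereas the quoted constant carries a dimension factor $d$ and the exponent $-\frac{1}{2\sigma^2}$. Following the argument of \cite{b19}, the route I would take to land on the stated form is to replace the sharp Euclidean bound by a coarser estimate --- bounding $\|\nabla\phi\|$ through its $d$ coordinate contributions, which is where the factor $d$ enters, and estimating the resulting scalar Gaussian factors by a fixed (unit-displacement) substitution, which produces $\exp(-\frac{1}{2\sigma^2})$ --- so that $L$ appears as a valid, though generally not tight, upper bound on $\sup_{\mathbf{z}}\|\nabla\phi(\mathbf{z})\|$. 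The care required here is to confirm that each such over-estimate is legitimate and that the two reductions chain together correctly, after which combining the coordinate-wise gradient bound with the Cauchy--Schwarz reduction of the first step closes the proof. As a sanity check I would verify that the stated $L$ collapses to the sharp constant in the base case $d=1$, $\sigma=1$, which indeed it does.
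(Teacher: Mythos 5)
Your first two steps are exactly right and follow the standard route: reducing the claim to a bound on $\sup_{\mathbf z}\|\nabla\phi(\mathbf z)\|$ via the fundamental theorem of calculus and Cauchy--Schwarz, computing $\nabla\phi(\mathbf z)=-\sigma^{-2}\mathbf z\,\phi(\mathbf z)$, and maximizing the radial profile at $r=\sigma$ to obtain the sharp Lipschitz constant $\frac{e^{-1/2}}{\sigma}(2\pi\sigma^2)^{-d/2}$. For reference, the paper itself gives no proof of this lemma at all --- it is imported from \cite{b19} --- so the evaluation here rests purely on the merits of your argument.

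The gap is in your third step, and it is not a reparable one: the stated constant $L=\frac{d}{\sqrt{2\pi\sigma^2}}e^{-\frac{1}{2\sigma^2}}$ is \emph{not} in general an upper bound on $\sup_{\mathbf z}\|\nabla\phi(\mathbf z)\|$, so no chain of ``legitimate over-estimates'' starting from your sharp computation can terminate at it. Comparing the two constants, $L\geq \frac{e^{-1/2}}{\sigma}(2\pi\sigma^2)^{-d/2}$ is equivalent to $d\,\sigma\,(2\pi\sigma^2)^{(d-1)/2}\geq e^{\frac{1}{2\sigma^2}-\frac{1}{2}}$, which holds for $\sigma\geq 1$ but fails badly as $\sigma\to 0$: the true Lipschitz constant diverges like $\frac{e^{-1/2}}{(2\pi)^{d/2}}\sigma^{-(d+1)}$, while $L\to 0$ because of the factor $e^{-1/(2\sigma^2)}$. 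Concretely, for $d=1$, $\sigma=\tfrac12$, mean $0$, take $\mathbf x=0$ and $\mathbf y=\sigma$: then $|\phi(\mathbf x)-\phi(\mathbf y)|=\frac{1-e^{-1/2}}{\sqrt{2\pi\sigma^2}}\approx 0.314$, while $L\,\|\mathbf x-\mathbf y\|\approx 0.054$, so the inequality in the lemma is itself false for small $\sigma$; it can only be established under an additional hypothesis such as $\sigma\geq 1$, in which regime the displayed comparison closes your proof immediately. Your sanity check $d=1$, $\sigma=1$ is precisely the boundary case where $L$ coincides with the sharp constant, which is why it did not expose the problem --- checking any $\sigma<1$ would have. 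The honest resolutions are either to prove the Lipschitz bound with the correct constant $\frac{e^{-1/2}}{\sigma}(2\pi\sigma^2)^{-d/2}$ (your steps one and two already do this), or to state and use the restriction on $\sigma$ under which the quoted $L$ is valid; as written, the plan to ``confirm that each such over-estimate is legitimate'' cannot be carried out.
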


\begin{theorem}[Lower bound of Jensen Gap of Reconstruction Error]
Under the conditions of \textbf{Theorem \ref{UpperBoundJensen}}, the Assumptions \ref{assu1}-\ref{assu4} and the Lemmas \ref{lemma2}-\ref{lemma3}, the lower bound of Jensen Gap is bounded by
\begin{equation}
    \mathcal{J}\left( \mathcal{L}\left( g(\mathbf{x}) \right), P_{\theta}(\mathbf{x}_0 | \mathbf{x}_{t}) \right) \geq \frac{1}{2} \left( \beta \lambda_{min}\left( \mathbf{J}_g(\mathbf{x})^T\mathbf{J}_g(\mathbf{x}) \right) - \alpha L_g \right) \sum_{i=1}^{n} \sigma_{i}^{2}
    \label{theo2eq}
\end{equation}
, where $\mathbf{J}_g(\mathbf{x}) = \nabla_{\mathbf{x}} g(\mathbf{x})$ is a full rank Jacobian matrix, $\lambda_{min}(\cdot)$ represents the smallest eigenvalue of $\cdot$, $L_g = \frac{d}{\sqrt{2 \pi \sigma^2}} \exp(-\frac{1}{2\sigma^2})$ and $\{ \sigma_i^2 \}_{i=1}^n$ are the eigenvalues of the covariance matrix of $P_{\theta}(\mathbf{x}_0 | \mathbf{x}_{t})$.
\label{lowerboundDSG}
\end{theorem}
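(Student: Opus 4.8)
The plan is to reduce the statement to Lemma~\ref{lemma2}. That lemma already delivers a lower bound of the form $\frac{1}{2}\beta'\sum_i \sigma_i^2$ whenever the composite loss $\mathcal{L}(g(\mathbf{x}))$ is $\beta'$-strongly convex in $\mathbf{x}$ and the posterior $P_{\theta}(\mathbf{x}_0\mid\mathbf{x}_t)$ is Gaussian with covariance eigenvalues $\{\sigma_i^2\}$ (the same Gaussian-posterior hypothesis is already inherited from the conditions of Lemma~\ref{lemma2} and Theorem~\ref{UpperBoundJensen}). Hence the entire theorem follows once I exhibit a valid strong-convexity modulus for the composite, and the target bound in Eq.~\eqref{theo2eq} identifies it as $\beta' = \beta\lambda_{min}(\mathbf{J}_g(\mathbf{x})^T\mathbf{J}_g(\mathbf{x})) - \alpha L_g$. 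So my sole task is to prove that $\mathcal{L}\circ g$ inherits strong convexity in $\mathbf{x}$ with exactly this modulus from Assumptions~\ref{assu1}--\ref{assu4}.

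To do so I would work at the level of the Hessian. Writing $h(\mathbf{x}) = \mathcal{L}(g(\mathbf{x}))$ and applying the chain rule twice gives the standard decomposition
\begin{equation}
\nabla^2_{\mathbf{x}} h(\mathbf{x}) = \mathbf{J}_g(\mathbf{x})^T \left(\nabla^2_g \mathcal{L}\right) \mathbf{J}_g(\mathbf{x}) + \sum_{k} \frac{\partial \mathcal{L}}{\partial g_k}\,\nabla^2_{\mathbf{x}} g_k(\mathbf{x}),
\end{equation}
which separates a Gauss--Newton term (the curvature of $\mathcal{L}$ transported through the Jacobian) from a residual term (the curvature of $g$ weighted by the gradient of $\mathcal{L}$). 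Strong convexity of $h$ with modulus $\beta'$ is equivalent to $\nabla^2_{\mathbf{x}} h \succeq \beta' I$ for all $\mathbf{x}\in\mathbb{X}$, so I need a positive-definite lower bound on the first matrix together with a matching operator-norm upper bound on the second.

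For the Gauss--Newton term, Assumption~\ref{assu1} gives $\nabla^2_g \mathcal{L}\succeq \beta I$, whence $\mathbf{J}_g^T(\nabla^2_g\mathcal{L})\mathbf{J}_g \succeq \beta\,\mathbf{J}_g^T\mathbf{J}_g \succeq \beta\,\lambda_{min}(\mathbf{J}_g^T\mathbf{J}_g)\,I$; here Assumption~\ref{assu4} (full-rank Jacobian) is precisely what guarantees $\lambda_{min}(\mathbf{J}_g^T\mathbf{J}_g)>0$, so this contribution is genuinely strongly convex. For the residual term I would bound its spectral norm by $\bigl\lVert\sum_k (\partial_{g_k}\mathcal{L})\nabla^2_{\mathbf{x}} g_k\bigr\rVert \le \alpha L_g$, controlling $\lVert\nabla_g\mathcal{L}\rVert$ through the $\alpha$-smoothness of Assumption~\ref{assu2} and the component Hessians $\nabla^2_{\mathbf{x}} g_k$ through the $L_g$-smoothness of Assumption~\ref{assu3}, with the scalar $L_g$ pinned down by Lemma~\ref{lemma3}. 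Subtracting the two estimates yields $\nabla^2_{\mathbf{x}} h \succeq \bigl(\beta\lambda_{min}(\mathbf{J}_g^T\mathbf{J}_g) - \alpha L_g\bigr)I$, i.e. $\beta' = \beta\lambda_{min}(\mathbf{J}_g^T\mathbf{J}_g) - \alpha L_g$, and substituting this modulus into Lemma~\ref{lemma2} closes the argument.

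I expect the residual-term estimate to be the main obstacle. The clean product $\alpha L_g$ hides two subtleties: $\alpha$-smoothness in the usual sense bounds $\nabla^2_g\mathcal{L}$ rather than $\lVert\nabla_g\mathcal{L}\rVert$ directly, so I must either evaluate near a reference point where the gradient is controlled or use the smoothness to convert gradient-Lipschitzness into a gradient-magnitude bound; and the weighted sum $\sum_k (\partial_{g_k}\mathcal{L})\nabla^2_{\mathbf{x}} g_k$ must be aggregated into a single operator-norm estimate compatible with the $L_g$ furnished by Lemma~\ref{lemma3}. Once these constants are reconciled, the inequality~\eqref{theo2eq} is immediate; note also that the bound is informative only when $\beta' > 0$, which implicitly requires $\beta\lambda_{min}(\mathbf{J}_g^T\mathbf{J}_g) > \alpha L_g$, a condition tying the conditioning of the Jacobian to the noise scale through $L_g$.
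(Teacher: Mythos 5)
Your proposal follows essentially the same route as the paper's proof: the same chain-rule Hessian decomposition $\nabla_{\mathbf{x}}^2\,\mathcal{L}(g(\mathbf{x})) = \mathbf{J}_g(\mathbf{x})^T\left(\nabla_g^2\mathcal{L}\right)\mathbf{J}_g(\mathbf{x}) + \sum_i \left(\partial_{g_i}\mathcal{L}\right)\nabla_{\mathbf{x}}^2 g_i(\mathbf{x})$, the same Gauss--Newton lower bound $\beta\,\lambda_{min}\left(\mathbf{J}_g(\mathbf{x})^T\mathbf{J}_g(\mathbf{x})\right)I$ via Assumptions~\ref{assu1} and~\ref{assu4}, the same residual estimate $\alpha L_g$ via Assumptions~\ref{assu2}--\ref{assu3} and Lemma~\ref{lemma3}, and the same final substitution of the modulus $\beta'$ into Lemma~\ref{lemma2}. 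The one obstacle you flag is genuine but is simply asserted in the paper, which writes $\| \nabla \mathcal{L}_i(g(\mathbf{x})) \| \leq \alpha$ directly from $\alpha$-smoothness without the conversion you anticipate needing (and silently drops the factor $m$ between its intermediate bound $m\alpha L_g$ and the final modulus), so your plan matches the published argument step for step and is, if anything, more careful about its weakest link.
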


\begin{proof}
    Taking Taylor's expansion of $\mathcal{L}(g(\mathbf{x}))$,
    \begin{equation}
        \mathcal{L}(g(\mathbf{x})) = \mathcal{L}(g(\mathbf{y})) + \left( \mathbf{J}_g(\mathbf{x})^T \nabla_g \mathcal{L}(g(\mathbf{x})) \right)^T (\mathbf{x} - \mathbf{y}) + \frac{1}{2} (\mathbf{x} - \mathbf{y})^T \nabla^2 \mathcal{L}(g(\mathbf{c})) (\mathbf{x} - \mathbf{y})
    \end{equation}
    , where $\nabla^2 \mathcal{L}(g(\mathbf{x}))$ is the Hessian matrix of $\mathcal{L}$ with respect to $\mathbf{x}$ and $\mathbf{c}$ satisfies $g(\mathbf{c}) \in [g(\mathbf{x}), g(\mathbf{y})]$.

    Pay attention to $\nabla^2 \mathcal{L}(g(\mathbf{c}))$,
    \begin{equation}
        \begin{split}
            \nabla^2 \mathcal{L}(g(\mathbf{x})) &= \nabla_{\mathbf{x}}\left( \mathbf{J}_g(\mathbf{x})^T \nabla_g \mathcal{L}(g(\mathbf{x})) \right) \\ 
            &= \mathbf{J}_g(\mathbf{x})^T \nabla_{g}^{2} \mathcal{L}(g(\mathbf{x})) \mathbf{J}_g(\mathbf{x}) + \sum_{i=1}^{m} \nabla \mathcal{L}_i(g(\mathbf{x})) \nabla_{\mathbf{x}}^2 g_i(\mathbf{x}),
        \end{split}
    \end{equation}
    , where $\nabla \mathcal{L}_i(g(\mathbf{x})) = \frac{\partial \mathcal{L}}{\partial g_i(\mathbf{x})}$ and $m$ is the dimension of $g(\mathbf{x})$. $\nabla^2 g_i(\mathbf{x})$ is the Hessian matrix of the $i$-th component of $g(\mathbf{x})$ with respect to $\mathbf{x}$. Since $\mathcal{L}$ is $\beta$ convex attack loss with respect to $g(\mathbf{x})$, we have $\beta I \preceq \nabla_g^2 \mathcal{L}(g(\mathbf{x}))$. Therefore,
    \begin{equation}
        \beta \lambda_{min}\left( \mathbf{J}_g(\mathbf{x})^T\mathbf{J}_g(\mathbf{x}) \right) I \preceq \beta  \mathbf{J}_g(\mathbf{x})^T\mathbf{J}_g(\mathbf{x}) \preceq \mathbf{J}_g(\mathbf{x})^T \nabla_{g}^{2} \mathcal{L}(g(\mathbf{x})) \mathbf{J}_g(\mathbf{x})
    \end{equation}
    , where $\lambda_{min}(\cdot)$ represents the smallest eigenvalue of $\cdot$.
    Since $g(\mathbf{x})$ is $L_g$ smooth, we have $\| \nabla^2 g_i(\mathbf{x}) \| \leq L_g$. And since $\mathcal{L}$ is $\alpha$ smooth with respect to $g(\mathbf{x})$, we have $\| \nabla \mathcal{L}_i(g(\mathbf{x})) \| \leq \alpha$. Therefore,
    \begin{equation}
        \| \sum_{i=1}^{m} \nabla \mathcal{L}_i(g(\mathbf{x})) \nabla_{\mathbf{x}}^2 g_i(\mathbf{x}) \| \leq m \alpha L_g.
    \end{equation}
    Thus,
    \begin{equation}
        \beta \lambda_{min}\left( \mathbf{J}_g(\mathbf{x})^T\mathbf{J}_g(\mathbf{x}) \right) I - m \alpha L_g I \preceq \nabla^2 \mathcal{L}(g(\mathbf{x}))
    \end{equation}

    \begin{equation}
        \sup (\lambda_{min}(\nabla^2 \mathcal{L}(g(\mathbf{c})))) = \beta \lambda_{min}\left( \mathbf{J}_g(\mathbf{x})^T\mathbf{J}_g(\mathbf{x}) \right) - \alpha L_g
    \end{equation}
    Consequently, $\mathcal{L}(g(\mathbf{x}))$ is $\beta^{\prime}$ convex where $\beta^{\prime}= \beta \lambda_{min}\left( \mathbf{J}_g(\mathbf{x})^T\mathbf{J}_g(\mathbf{x}) \right) - \alpha L_g$. When $g(\mathbf{x})$ is Gaussian noisy gradient, $L_g = \frac{d}{\sqrt{2 \pi \sigma^2}} \exp(-\frac{1}{2\sigma^2})$.
\end{proof}

In Theorem \ref{lowerboundDSG}, the term $\frac{1}{2}\left( \beta \lambda_{\text{min}}\left( \mathbf{J}_g(\mathbf{x})^T\mathbf{J}_g(\mathbf{x}) \right) \right) \cdot \sum_{i=1}^{n} \sigma_{i}^{2}$ represents the maximum value of the lower bound. The expression $\lambda_{\text{min}}\left( \mathbf{J}_g(\mathbf{x})^T\mathbf{J}_g(\mathbf{x}) \right)$ can be interpreted as the information loss associated with the mapping $\mathbf{x} \mapsto g$. This suggests that it is impossible to reconstruct an exact replica of the original private image from the leaked gradients. The degree of information loss is scaled by the randomness inherent in the denoising steps, with the scaling factor $\sum_{i=1}^{n} \sigma_{i}^{2}$ determined by the denoising schedule in diffusion models.

In Theorem \ref{lowerboundDSG}, a larger noise scale $\sigma^2$ leads to a smaller value of $L_g$, thereby increasing the lower bound of the Jensen Gap. In other words, the reconstruction error is positively correlated with the noise scale. Specifically, the inequality $L_g \cdot \sum_{i=1}^{n} \sigma_{i}^{2} \leq \frac{d \cdot \sum_{i=1}^{n} \sigma_{i}^{2}}{\sqrt{2 \pi \sigma^2}}$ holds. For a large differentially private noise scale $\sigma^2$, the term $\sum_{i=1}^{n} \sigma_{i}^{2}$ can be chosen to be sufficiently large, ensuring that $\frac{d \cdot \sum_{i=1}^{n} \sigma_{i}^{2}}{\sqrt{2 \pi \sigma^2}}$ remains large, and then the lower bound remains small. A larger value of $\sum_{i=1}^{n} \sigma_{i}^{2}$ typically indicates a greater number of denoising steps in diffusion models. Theoretically, this implies that an attacker can select an appropriate pre-trained diffusion model and increase the number of denoising steps to improve the reconstruction quality.

Moreover, for Eq.(\ref{theo2eq}), the larger $\alpha$ results in the smaller value of the lower bound. Combined with Assumption \ref{assu2} and Eq.(\ref{eq-DPS}), Eq.(\ref{eq-DPS}) is closely related to the smoothness of $\mathcal{L}$ with respect to $g(\mathbf{x})$. The $\alpha$ works with step size $\zeta_i$ in Eq.(\ref{eq-DPS}). Namely, larger $\zeta_i$ is preferred theoretically. We demonstrate this finding in our experiments which are discussed later.

\textbf{DSG-Based Conditional Diffusion Attack Model (DSG-based Method).} 
DPS is based on both the assumption $P_{\theta}({g} | \mathbf{x}_{t}) \simeq P_{\theta}({g} | \hat{\mathbf{x}}_{0}) = P_{\theta}({g} | \hat{\mathbf{x}}_{0}(\mathbf{x}_{t}))$ and the linear manifold assumption. These two assumptions will introduce additional errors as illustrated in \textbf{Theorem \ref{lowerboundDSG}}, leading to a substantial decline in the authenticity of generated samples. To mitigate the additional error of the DPS-based reconstruction method, we propose the DSG-based method (see Algorithm \ref{alg: DSG}) based on \cite{b24} to improve the reconstruction quality of $\epsilon_{\theta}(\mathbf{x} | {g})$. 

\begin{algorithm}[htb]
    \caption{\textbf{DSG-Based Reconstruction Method}}
    \label{alg: DSG}
    \KwIn{$N, \tilde{g}, \text{guidance interval} \; i, \text{step size} \; r, \text{guidance rate} \; m_r,$\\
    $ \{\tilde{\sigma}_t\}_{t=1}^N, \text{denoising network} \; \epsilon_{\theta}(\mathbf{x}_t,t)$}
     $\mathbf{x}_N \sim \mathcal{N}(0, I)$\\
    \For{$t=N$ to $1$}{
         $\hat{s} \leftarrow s_{\theta}(\mathbf{x}_t, t)$\\
         $\hat{\mathbf{x}}_0 \leftarrow \frac{1}{\sqrt{\bar{\alpha}_t}} \left( \mathbf{x}_t - (1 - \bar{\alpha}_t) \hat{s} \right)$\\
         $u_{\theta}(\mathbf{x}_t) = \sqrt{\bar{\alpha}_{t-1}} \hat{\mathbf{x}}_0 + \sqrt{1-\bar{\alpha}_{t-1}-\tilde{\sigma}_t^2} \cdot \epsilon_{\theta}(\mathbf{x}_t,t)$\\
         $d^{*} = -\sqrt{n} \tilde{\sigma}_t \cdot \frac{\nabla_{\mathbf{x}_t} \mathcal{L}(g(\hat{\mathbf{x}}_0; W), \tilde{g})}{\| \nabla_{\mathbf{x}_t} \mathcal{L}(g(\hat{\mathbf{x}}_0; W), \tilde{g})\|}$\\
         $d^{\text{sample}} = \tilde{\sigma}_t \epsilon_t$\\
         $d_m = d^{\text{sample}} + m_r (d^{*} - d^{\text{sample}})$\\
         $\mathbf{x}_{t-1} \leftarrow u_{\theta}(\mathbf{x}_t) + r \frac{d_m}{\| d_m \|} $}
    \Return $\hat{\mathbf{x}}_0$
\end{algorithm}

When it comes to our reconstruction case, the key issue is whether the attack loss $\mathcal{L}\left( g(\mathbf{x}) \right)$ decreases as the reconstructed samples $\{ \mathbf{x}_t \}_{t=T}^{0} \sim \epsilon_{\theta}(\mathbf{x} | {g})$ are gradually generated by Algorithm \ref{alg: DSG}. Namely, whether $\{ \mathcal{L}(\mathbf{x}_{t}) \}_{t=T}^{0}$ is a decreasing sequence. If so, what is the decreasing speed? Is the decreasing speed correlated to the differentially private noise scale? Thus, it is necessary to analyze the upper and lower bound of $\mathcal{L}(\mathbf{x}_{t-1}) - \mathcal{L}(\mathbf{x}_{t})$. To do so, we need the following Lemma \ref{lemma4}.

\begin{lemma}
    Assuming that $\mathcal{L}(g(\mathbf{x}))$ being $\beta^{\prime}$-strongly convex and $\alpha^{\prime}$-smooth with respect to $\mathbf{x}$. Let $H(\mathcal{L}(\mathbf{x}))$ represents the Hessian of $\mathcal{L}$ with respect to $\mathbf{x}$, we have 
    \begin{equation}
        \beta^{\prime} I \preceq H(\mathcal{L}(\mathbf{x})) \preceq \alpha^{\prime} I
    \end{equation}
    \label{lemma4}
    , where $I$ is the identity matrix.
\end{lemma}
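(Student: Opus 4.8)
The plan is to recognize that Lemma \ref{lemma4} is nothing more than the second-order (Hessian) characterization of the two first-order hypotheses already in place, namely that $\mathcal{L}(g(\mathbf{x}))$ is $\beta^{\prime}$-strongly convex and $\alpha^{\prime}$-smooth with respect to $\mathbf{x}$. Since the proof of Theorem \ref{lowerboundDSG} already differentiates $\mathcal{L}(g(\mathbf{x}))$ twice, I may treat $\mathcal{L}$ as twice differentiable in $\mathbf{x}$ and freely invoke Taylor's theorem with the integral form of the remainder. The proof then splits into translating each scalar assumption into one of the two spectral inequalities on $H(\mathcal{L}(\mathbf{x})) = \nabla_{\mathbf{x}}^2 \mathcal{L}(g(\mathbf{x}))$.

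First I would establish the lower bound $\beta^{\prime} I \preceq H(\mathcal{L}(\mathbf{x}))$. By definition of $\beta^{\prime}$-strong convexity, the auxiliary function $h(\mathbf{x}) := \mathcal{L}(g(\mathbf{x})) - \tfrac{\beta^{\prime}}{2}\|\mathbf{x}\|^2$ is convex, so its Hessian $\nabla^2 h(\mathbf{x}) = H(\mathcal{L}(\mathbf{x})) - \beta^{\prime} I$ is positive semidefinite, which rearranges directly to the claimed lower bound. Equivalently, expanding the strong-convexity inequality along an arbitrary unit direction $\mathbf{v}$ and passing to the second-order Taylor limit gives $\mathbf{v}^{T} H(\mathcal{L}(\mathbf{x})) \mathbf{v} \geq \beta^{\prime}$, i.e. every eigenvalue of the symmetric matrix $H(\mathcal{L}(\mathbf{x}))$ is at least $\beta^{\prime}$.

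Next I would establish the upper bound $H(\mathcal{L}(\mathbf{x})) \preceq \alpha^{\prime} I$. The $\alpha^{\prime}$-smoothness hypothesis says the gradient $\nabla_{\mathbf{x}} \mathcal{L}(g(\mathbf{x}))$ is $\alpha^{\prime}$-Lipschitz; writing $\nabla \mathcal{L}(\mathbf{x}) - \nabla \mathcal{L}(\mathbf{y}) = \int_0^1 H\bigl(\mathcal{L}(\mathbf{y} + s(\mathbf{x}-\mathbf{y}))\bigr)(\mathbf{x}-\mathbf{y})\,ds$ and applying the Lipschitz estimate bounds the spectral norm of the Hessian by $\alpha^{\prime}$, so all its eigenvalues lie in $[-\alpha^{\prime}, \alpha^{\prime}]$ and in particular $H(\mathcal{L}(\mathbf{x})) \preceq \alpha^{\prime} I$. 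Combining the two spectral inequalities yields $\beta^{\prime} I \preceq H(\mathcal{L}(\mathbf{x})) \preceq \alpha^{\prime} I$, as required.

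I do not expect a genuine technical obstacle: this is the standard convex-optimization equivalence between $(\beta^{\prime}, \alpha^{\prime})$-conditioning and Hessian eigenvalue bounds, and the statement is essentially a restatement of the hypotheses. The only point deserving mild care is bookkeeping, ensuring the hypotheses are read as conditioning with respect to $\mathbf{x}$ (not with respect to $g(\mathbf{x})$), which is consistent with how $\beta^{\prime}$ and $\alpha^{\prime}$ are produced in Theorem \ref{lowerboundDSG} by composing the conditioning of $\mathcal{L}$ in $g$ with the geometry of the map $\mathbf{x} \mapsto g(\mathbf{x})$; once the hypotheses are phrased in $\mathbf{x}$, both Hessian bounds follow immediately.
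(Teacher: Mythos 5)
Your proof is correct, but note that the paper itself offers \emph{no} proof of Lemma \ref{lemma4}: it is stated as a bare fact and consumed directly in the proofs of Theorems \ref{upperDSG} and \ref{DSGlower}. What you have supplied is the canonical second-order characterization of $(\beta^{\prime},\alpha^{\prime})$-conditioning---strong convexity giving $H(\mathcal{L}(\mathbf{x}))-\beta^{\prime}I\succeq 0$ via convexity of $\mathcal{L}(g(\mathbf{x}))-\tfrac{\beta^{\prime}}{2}\|\mathbf{x}\|^{2}$, and the Lipschitz-gradient reading of smoothness giving $\|H(\mathcal{L}(\mathbf{x}))\|\leq\alpha^{\prime}$ via the integral form of the mean value theorem---and both directions are sound under the twice-differentiability that the paper already assumes implicitly whenever it writes Hessians. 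One point of caution, which you flag only obliquely in your bookkeeping remark: the paper's usage of ``smooth'' is not internally consistent. In the proof of Theorem \ref{lowerboundDSG} the hypothesis ``$\mathcal{L}$ is $\alpha$ smooth with respect to $g(\mathbf{x})$'' is invoked to bound the \emph{first} derivatives, $\|\nabla\mathcal{L}_{i}(g(\mathbf{x}))\|\leq\alpha$, i.e.\ as a bounded-gradient condition rather than a Lipschitz-gradient condition. Under that nonstandard reading, $\alpha^{\prime}$-smoothness would not yield the Hessian upper bound $H(\mathcal{L}(\mathbf{x}))\preceq\alpha^{\prime}I$ at all, so your standard interpretation is not merely consistent with the lemma---it is the only interpretation under which the lemma is true, and your proof makes explicit a hypothesis discipline the paper itself blurs.
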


\begin{theorem}
    Under the conditions of Lemma \ref{lemma4}, if the conditional denoising diffusion model decreases the attack loss whose denoising steps go as Algorithm \ref{alg: DSG}, then the denoising steps in Algorithm \ref{alg: DSG} results in a decreasing sequence $\{ \mathcal{L}(\mathbf{x}_{t}) \}_{t=T}^{0}$.
    \label{upperDSG}
\end{theorem}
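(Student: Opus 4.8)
The plan is to prove the monotonicity one step at a time: it suffices to show $\mathcal{L}(\mathbf{x}_{t-1}) \le \mathcal{L}(\mathbf{x}_t)$ for every $t$, after which $\{\mathcal{L}(\mathbf{x}_t)\}_{t=T}^{0}$ is decreasing by a trivial induction on $t$ running backward. The central tool will be the descent inequality that follows from Lemma \ref{lemma4}: since $H(\mathcal{L}(\mathbf{x})) \preceq \alpha^{\prime} I$, for any two points we have the quadratic upper bound $\mathcal{L}(\mathbf{y}) \le \mathcal{L}(\mathbf{x}) + \langle \nabla \mathcal{L}(\mathbf{x}), \mathbf{y}-\mathbf{x}\rangle + \frac{\alpha^{\prime}}{2}\|\mathbf{y}-\mathbf{x}\|^2$. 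I would apply this with $\mathbf{x}=\mathbf{x}_t$ and $\mathbf{y}=\mathbf{x}_{t-1}$, reducing the whole claim to controlling the sign of the first-order term $\langle \nabla \mathcal{L}(\mathbf{x}_t), \mathbf{x}_{t-1}-\mathbf{x}_t\rangle$ against the second-order error $\frac{\alpha^{\prime}}{2}\|\mathbf{x}_{t-1}-\mathbf{x}_t\|^2$.

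Next I would unpack the update produced by Algorithm \ref{alg: DSG}. Writing $\mathbf{x}_{t-1}-\mathbf{x}_t = \left(u_{\theta}(\mathbf{x}_t)-\mathbf{x}_t\right) + r\, d_m/\|d_m\|$ separates the displacement into the deterministic denoising drift and the guided step. The guided step is governed by $d_m = (1-m_r)\, d^{\text{sample}} + m_r\, d^{*}$, where the guidance direction $d^{*} = -\sqrt{n}\,\tilde{\sigma}_t\, \nabla_{\mathbf{x}_t}\mathcal{L}/\|\nabla_{\mathbf{x}_t}\mathcal{L}\|$ is, up to a positive scalar, exactly the normalized negative gradient of the attack loss. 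Consequently $\langle \nabla_{\mathbf{x}_t}\mathcal{L}, d^{*}\rangle = -\sqrt{n}\,\tilde{\sigma}_t\,\|\nabla_{\mathbf{x}_t}\mathcal{L}\| < 0$, so the guidance component contributes a strictly negative first-order term, and increasing the guidance rate $m_r$ amplifies this descent contribution relative to the stochastic sampling direction $d^{\text{sample}} = \tilde{\sigma}_t \epsilon_t$.

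The hard part will be handling the two non-descent contributions --- the random sampling direction $d^{\text{sample}}$ and the denoising drift $u_{\theta}(\mathbf{x}_t)-\mathbf{x}_t$ --- together with the nonlinearity introduced by the normalization $d_m/\|d_m\|$. This is precisely where the hypothesis that the conditional denoising model decreases the attack loss enters: it guarantees that, after interpolation, the net direction $d_m$ remains a descent direction for $\mathcal{L}$, i.e. $\langle \nabla_{\mathbf{x}_t}\mathcal{L}, d_m\rangle \le 0$, with the drift contribution not overturning the sign secured by the guidance term. I would then combine this with $\beta^{\prime}$-strong convexity to obtain a quantitative lower bound on $\|\nabla_{\mathbf{x}_t}\mathcal{L}\|$ away from the minimizer, and choose the step size $r$ small enough (in terms of $\alpha^{\prime}$ and the magnitude of the negative first-order term) so that the linear descent term dominates the $\frac{\alpha^{\prime}}{2}\|\mathbf{x}_{t-1}-\mathbf{x}_t\|^2$ error.

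Substituting these estimates back into the descent inequality then yields $\mathcal{L}(\mathbf{x}_{t-1})-\mathcal{L}(\mathbf{x}_t) \le 0$, establishing the per-step decrease and hence that $\{\mathcal{L}(\mathbf{x}_t)\}_{t=T}^{0}$ is a decreasing sequence. I expect the crux of the rigor to be quantifying a ``sufficiently large $m_r$'' and a ``sufficiently small $r$'' so that the bound holds uniformly in $t$ despite the schedule-dependent factors $\tilde{\sigma}_t$ and the step-dependent randomness in $\epsilon_t$.
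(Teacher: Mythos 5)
You begin exactly where the paper does---a second-order Taylor expansion of $\mathcal{L}$ at $\mathbf{x}_t$ with the Hessian bounded above by $\alpha^{\prime} I$ via Lemma \ref{lemma4}---but the way you propose to close the bound has a genuine gap. Your plan is to shrink the step size $r$ until the negative first-order term dominates $\frac{\alpha^{\prime}}{2}\|\mathbf{x}_{t-1}-\mathbf{x}_t\|^2$. However, in your own decomposition $\mathbf{x}_{t-1}-\mathbf{x}_t = \left(u_{\theta}(\mathbf{x}_t)-\mathbf{x}_t\right) + r\, d_m/\|d_m\|$, the denoising drift $u_{\theta}(\mathbf{x}_t)-\mathbf{x}_t$ does not scale with $r$: as $r \to 0$ the descent secured by the guidance term, which is at most $r\,\|\nabla_{\mathbf{x}}\mathcal{L}(\mathbf{x}_t)\|$ in magnitude, vanishes, while the quadratic penalty $\frac{\alpha^{\prime}}{2}\|u_{\theta}(\mathbf{x}_t)-\mathbf{x}_t\|^2$ (plus cross terms) persists, so tuning $r$ small works against you rather than for you. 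Your invocation of the theorem's hypothesis controls only the sign of first-order quantities ($\langle \nabla_{\mathbf{x}}\mathcal{L}(\mathbf{x}_t), d_m\rangle \le 0$ and the drift's linear contribution); it leaves the drift's second-order term uncontrolled, and granting enough of the hypothesis to absorb that term would amount to assuming the per-step decrease you are trying to prove. Note also that $r$ and $m_r$ are fixed inputs to Algorithm \ref{alg: DSG}, not parameters the proof is free to take ``sufficiently small'' or ``sufficiently large''; the theorem asserts the decrease for the algorithm as given.

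The paper closes the bound by a different mechanism that your plan does not contain. It treats the DSG update as the exact minimizer of the linearization over the sampling sphere, $\mathbf{x}_{t-1} = \arg\min_{\mathbf{x} \in S_{u_{\theta}, \sqrt{n}\tilde{\sigma}_t}^{n}} \left( \nabla_{\mathbf{x}} \mathcal{L}(\mathbf{x}_t) \right)^T (\mathbf{x}-\mathbf{x}_t)$---effectively the $m_r = 1$, $r = \sqrt{n}\tilde{\sigma}_t$ regime, where the step is spherically constrained steepest descent---and then compares against an auxiliary point $\mathbf{x}_{t-1}^{+}$, chosen along the direction of $\nabla_{\mathbf{x}}\mathcal{L}(\mathbf{x}_t)$ so that its linear term equals exactly $-\frac{\alpha^{\prime}}{2}\|\mathbf{x}_{t-1}-\mathbf{x}_t\|^2$, making the upper bound exactly zero with no smallness assumption on the step; the extremality of $\mathbf{x}_{t-1}$ is where the (admittedly informal) hypothesis does its work. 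To repair your proposal you would either adopt this argmin-plus-comparison-point device, or strengthen your reading of the hypothesis into a two-stage descent: first $\mathcal{L}(u_{\theta}(\mathbf{x}_t)) \le \mathcal{L}(\mathbf{x}_t)$ for the drift including its second-order effect, then a smoothness bound for the guided step of length $r$ launched from $u_{\theta}(\mathbf{x}_t)$---observing in that case that $d^{*}$ is the normalized negative gradient evaluated at $\mathbf{x}_t$, not at $u_{\theta}(\mathbf{x}_t)$, so an additional smoothness correction relating the two gradients would be needed.
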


\begin{proof}
    Take Taylor expansion on $\mathcal{L}(\mathbf{x})$ at the point $\mathbf{x}_{t}$ which states:
    \begin{equation}
        \mathcal{L}(\mathbf{x}) = \mathcal{L}(\mathbf{x}_{t}) + \left ( \nabla_{\mathbf{x}} \mathcal{L}(\mathbf{x}_t) \right)^T (\mathbf{x} - \mathbf{x}_{t}) + \frac{1}{2} (\mathbf{x} - \mathbf{x}_{t})^T H(\mathcal{L}(c)) (\mathbf{x} - \mathbf{x}_{t})
    \end{equation}
    , where $c$ is some point between $\mathbf{x}$ and $\mathbf{x}_{t}$.
    Subsequently, we have:
    \begin{equation}
    \begin{split}
        \mathcal{L}(\mathbf{x}_{t-1}) -  \mathcal{L}(\mathbf{x}_{t}) & = \left ( \nabla_{\mathbf{x}} \mathcal{L}(\mathbf{x}_t) \right)^T (\mathbf{x}_{t-1} - \mathbf{x}_{t}) + \frac{1}{2} (\mathbf{x}_{t-1} - \mathbf{x}_{t})^T H(\mathcal{L}(c)) (\mathbf{x}_{t-1} - \mathbf{x}_{t})\\
        & \leq \left ( \nabla_{\mathbf{x}} \mathcal{L}(\mathbf{x}_t) \right)^T (\mathbf{x}_{t-1} - \mathbf{x}_{t}) + \frac{\alpha^{\prime}}{2} (\mathbf{x}_{t-1} - \mathbf{x}_{t})^T (\mathbf{x}_{t-1} - \mathbf{x}_{t}).
    \end{split}
    \label{DSG-upp1}
    \end{equation}
    The inequality is because the eigenvalue of $H(\mathcal{L}(c))$ is bounded within $[ \beta^{\prime}, \alpha^{\prime} ]$. And $\mathbf{x}_{t-1} = \arg \min_{\mathbf{x}} \left ( \nabla_{\mathbf{x}} \mathcal{L}(\mathbf{x}_t) \right)^T (\mathbf{x} - \mathbf{x}_{t})$ where $\mathbf{x} \in S_{u_{\theta}, \sqrt{n} \tilde{\sigma}_t}^{n}$, it means that we choose any other value to substitute for $\mathbf{x}_{t-1}$ will amplify the upper bound in Eq.(\ref{DSG-upp1}). By replacing $\mathbf{x}_{t-1}$ with $\mathbf{x}_{t-1}^{+}$ where 

    \begin{equation}
        \mathbf{x}_{t-1}^{+} = \frac{\left ( \nabla_{\mathbf{x}} \mathcal{L}(\mathbf{x}_t) \right)^T \mathbf{x}_{t} - (\alpha^{\prime}/2)(\mathbf{x}_{t-1} - \mathbf{x}_{t})^T (\mathbf{x}_{t-1} - \mathbf{x}_{t})}{\left ( \nabla_{\mathbf{x}} \mathcal{L}(\mathbf{x}_t) \right)^T \left ( \nabla_{\mathbf{x}} \mathcal{L}(\mathbf{x}_t) \right)} \cdot \left ( \nabla_{\mathbf{x}} \mathcal{L}(\mathbf{x}_t) \right)
    \end{equation}
    , we have 
    \begin{equation}
        \mathcal{L}(\mathbf{x}_{t-1}) -  \mathcal{L}(\mathbf{x}_{t}) \leq \left ( \nabla_{\mathbf{x}} \mathcal{L}(\mathbf{x}_t) \right)^T (\mathbf{x}_{t-1}^{+} - \mathbf{x}_{t}) + \frac{\alpha^{\prime}}{2} (\mathbf{x}_{t-1} - \mathbf{x}_{t})^T (\mathbf{x}_{t-1} - \mathbf{x}_{t}) = 0.
    \end{equation}  
\end{proof}

This indicates that $\{ \mathcal{L}(\mathbf{x}_{t}) \}_{t=T}^{0}$ is a decreasing sequence. Thus, we can conclude that Algorithm \ref{alg: DSG} will decrease the attack loss when denoising steps $t$ move on. Next, we derive the lower bound of $\mathcal{L}(\mathbf{x}_{t-1}) -  \mathcal{L}(\mathbf{x}_{t})$. The lower bound indicates the decreasing speed of the attack loss. Before deriving the lower bound, we need Lemma \ref{theorem4} about the upper bound of $\| \nabla_{\mathbf{x}} \mathcal{L}(\mathbf{x}_t) \|_2$.

\begin{lemma}[Upper bound of $\| \nabla_{\mathbf{x}} \mathcal{L}(\mathbf{x}_t) \|_2$]
    Under conditions in Assumptions \ref{assu1}-\ref{assu4}, the upper bound of $\| \nabla_{\mathbf{x}} \mathcal{L}(\mathbf{x}_t) \|_2$ is:

    \begin{equation}
        \| \nabla_{\mathbf{x}} \mathcal{L}(\mathbf{x}_t) \|_2 \leq \sqrt{\lambda_{max}(\mathbf{J}_g(\mathbf{x})\mathbf{J}_g(\mathbf{x})^T)} \cdot L_g
    \end{equation}
   , where $L_g = \frac{d}{\sqrt{2 \pi \sigma^2}} \exp(-\frac{1}{2\sigma^2})$.
   \label{theorem4}
\end{lemma}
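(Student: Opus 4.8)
The plan is to exploit the composite structure $\mathcal{L}(\mathbf{x}_t) = \mathcal{L}(g(\mathbf{x}_t))$ and factor the gradient in $\mathbf{x}$ into a Jacobian piece and a $g$-space gradient piece. First I would apply the chain rule,
\begin{equation}
    \nabla_{\mathbf{x}} \mathcal{L}(g(\mathbf{x})) = \mathbf{J}_g(\mathbf{x})^T \, \nabla_g \mathcal{L}(g(\mathbf{x})),
\end{equation}
with $\mathbf{J}_g(\mathbf{x}) = \nabla_{\mathbf{x}} g(\mathbf{x})$ the Jacobian of Assumption \ref{assu4}. Taking Euclidean norms and using submultiplicativity of the spectral (operator) norm then gives
\begin{equation}
    \| \nabla_{\mathbf{x}} \mathcal{L}(\mathbf{x}_t) \|_2 \leq \| \mathbf{J}_g(\mathbf{x})^T \|_2 \, \| \nabla_g \mathcal{L}(g(\mathbf{x})) \|_2,
\end{equation}
so the task reduces to bounding each of the two factors separately.

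For the Jacobian factor I would identify the spectral norm with the claimed eigenvalue. The spectral norm of $\mathbf{J}_g(\mathbf{x})^T$ equals its largest singular value, which agrees with that of $\mathbf{J}_g(\mathbf{x})$; squaring yields $\| \mathbf{J}_g(\mathbf{x})^T \|_2^2 = \lambda_{max}(\mathbf{J}_g(\mathbf{x}) \mathbf{J}_g(\mathbf{x})^T)$, using that $\mathbf{J}_g \mathbf{J}_g^T$ and $\mathbf{J}_g^T \mathbf{J}_g$ share the same nonzero spectrum. Assumption \ref{assu4} (full-rank Jacobian) ensures this largest eigenvalue is strictly positive, so $\sqrt{\lambda_{max}(\mathbf{J}_g(\mathbf{x}) \mathbf{J}_g(\mathbf{x})^T)}$ is a genuine, finite operator norm rather than a degenerate quantity.

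The main obstacle is bounding the second factor by $L_g$, and this is where the Gaussian structure of the differentially private gradient enters. Because the noisy gradient is isotropic Gaussian with variance $\sigma^2 I$, the attack loss depends on $g$ through the Gaussian density $\phi$, which by Lemma \ref{lemma3} is $L$-Lipschitz with $L = \frac{d}{\sqrt{2\pi\sigma^2}}\exp(-\frac{1}{2\sigma^2}) = L_g$. Since a Lipschitz map has gradient norm no larger than its Lipschitz constant wherever it is differentiable, I would conclude $\| \nabla_g \mathcal{L}(g(\mathbf{x})) \|_2 \leq L_g$; substituting both bounds gives the stated inequality. The delicate step I expect to need most care is precisely this transfer: rigorously arguing that the $g$-gradient of $\mathcal{L}$ inherits the density's Lipschitz constant, with the smoothness and convexity hypotheses of Assumptions \ref{assu1}--\ref{assu3} supplying the regularity of $\mathcal{L}$ in $g$ and of $g$ in $\mathbf{x}$ that makes the substitution legitimate.
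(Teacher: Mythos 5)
Your proposal is correct and follows essentially the same route as the paper's proof: both factor $\nabla_{\mathbf{x}} \mathcal{L}$ via the chain rule into $\mathbf{J}_g(\mathbf{x})^T \nabla_g \mathcal{L}(g(\mathbf{x}))$, bound the Jacobian factor by $\sqrt{\lambda_{max}(\mathbf{J}_g(\mathbf{x})\mathbf{J}_g(\mathbf{x})^T)}$ (the paper carries out an explicit eigendecomposition $\mathbf{J}_g\mathbf{J}_g^T = U^T \Lambda U$, which is just an inline proof of the operator-norm identity you invoke directly), and bound $\| \nabla_g \mathcal{L}(g(\mathbf{x})) \|_2$ by the Gaussian-density Lipschitz constant $L_g$ from Lemma \ref{lemma3}. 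If anything, you are more explicit than the paper about the final step, which the paper's proof asserts without justification.
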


\begin{proof}
    \begin{equation}
        \begin{split}
            \| \nabla_{\mathbf{x}} \mathcal{L}(\mathbf{x}) \|_2^2 &= \left( \nabla_{\mathbf{x}} \mathcal{L}(\mathbf{x}) \right)^T \left( \nabla_{\mathbf{x}} \mathcal{L}(\mathbf{x}) \right)\\
            &= \left( \mathbf{J}_g(\mathbf{x})^T \nabla_g \mathcal{L}(g(\mathbf{x})) \right)^T \left( \mathbf{J}_g(\mathbf{x})^T \nabla_g \mathcal{L}(g(\mathbf{x})) \right)\\
            &= \nabla_g \mathcal{L}(g(\mathbf{x}))^T \left( \mathbf{J}_g(\mathbf{x})\mathbf{J}_g(\mathbf{x})^T \right)\nabla_g \mathcal{L}(g(\mathbf{x}))
        \end{split}
    \end{equation}
    Assuming the eigendecomposition of $\mathbf{J}_g(\mathbf{x})\mathbf{J}_g(\mathbf{x})^T$ is $\mathbf{J}_g(\mathbf{x})\mathbf{J}_g(\mathbf{x})^T = U^T \Lambda U$, then we have:
    \begin{equation}
        \begin{split}
            \| \nabla_{\mathbf{x}} \mathcal{L}(\mathbf{x}) \|_2^2 &= \nabla_g \mathcal{L}(g(\mathbf{x}))^T \left( U^T \Lambda U \right)\nabla_g \mathcal{L}(g(\mathbf{x})) \\
            &= \nabla_g \mathcal{L}(g(\mathbf{x}))^T \left( U^T \Lambda^{\frac{1}{2}} \Lambda^{\frac{1}{2}} U\right)\nabla_g \mathcal{L}(g(\mathbf{x}))\\
            &=\nabla_g \mathcal{L}(g(\mathbf{x}))^T \left( U^T (\Lambda^{\frac{1}{2}})^T \Lambda^{\frac{1}{2}} U\right)\nabla_g \mathcal{L}(g(\mathbf{x}))\\
            &= \left( \Lambda^{\frac{1}{2}} U \nabla_g \mathcal{L}(g(\mathbf{x})) \right)^T \left( \Lambda^{\frac{1}{2}} U \nabla_g \mathcal{L}(g(\mathbf{x})) \right)\\
            &= \| \Lambda^{\frac{1}{2}} U \nabla_g \mathcal{L}(g(\mathbf{x})) \|_2^2.
        \end{split}
    \end{equation}
    Thus, 
    \begin{equation}
        \begin{split}
            \| \nabla_{\mathbf{x}} \mathcal{L}(\mathbf{x}) \|_2 &= \| \Lambda^{\frac{1}{2}} U \nabla_g \mathcal{L}(g(\mathbf{x})) \|_2\\
            & \leq \| \Lambda^{\frac{1}{2}} \|_2 \| U \nabla_g \mathcal{L}(g(\mathbf{x})) \|_2 = \| \Lambda^{\frac{1}{2}} \|_2 \| \nabla_g \mathcal{L}(g(\mathbf{x})) \|_2 \\
            & \leq \sqrt{\lambda_{max}(\mathbf{J}_g(\mathbf{x})\mathbf{J}_g(\mathbf{x})^T)} \cdot L_g
        \end{split}
    \end{equation}
    , where $L_g = \frac{d}{\sqrt{2 \pi \sigma^2}} \exp(-\frac{1}{2\sigma^2})$.
\end{proof}

\begin{theorem}
    Under the conditions in Theorem \ref{theorem4}, the attack loss further satisfies
    \begin{equation}
        \mathcal{L}(\mathbf{x}_{t-1}) - \mathcal{L}(\mathbf{x}_{t}) \geq \frac{\beta^{\prime} n \tilde{\sigma}_t^2}{2} +\beta^{\prime} k \sqrt{n} \tilde{\sigma}_t - (k + \sqrt{n} \tilde{\sigma}_t) (\sqrt{\lambda_{max}(\mathbf{J}_g(\mathbf{x})\mathbf{J}_g(\mathbf{x})^T)} \cdot L_g) + \frac{\beta^{\prime} k^2}{2}
    \end{equation}
    , where $k$ is a positive constant.
    \label{DSGlower}
\end{theorem}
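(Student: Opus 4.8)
The plan is to bound the one-step change $\mathcal{L}(\mathbf{x}_{t-1}) - \mathcal{L}(\mathbf{x}_t)$ from below by reusing the second-order Taylor expansion already established in the proof of Theorem \ref{upperDSG}, but now extracting a lower bound from the Hessian instead of an upper bound, and then controlling the two resulting terms with the geometry of the DSG update in Algorithm \ref{alg: DSG} together with Lemma \ref{theorem4}.

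First I would start from the exact identity
\begin{equation}
    \mathcal{L}(\mathbf{x}_{t-1}) - \mathcal{L}(\mathbf{x}_{t}) = \left( \nabla_{\mathbf{x}} \mathcal{L}(\mathbf{x}_t) \right)^T (\mathbf{x}_{t-1} - \mathbf{x}_{t}) + \frac{1}{2} (\mathbf{x}_{t-1} - \mathbf{x}_{t})^T H(\mathcal{L}(c)) (\mathbf{x}_{t-1} - \mathbf{x}_{t}),
\end{equation}
and invoke Lemma \ref{lemma4}, which gives $H(\mathcal{L}(c)) \succeq \beta^{\prime} I$, to replace the quadratic form by $\frac{\beta^{\prime}}{2}\|\mathbf{x}_{t-1} - \mathbf{x}_t\|^2$. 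For the linear term I would apply Cauchy--Schwarz, $\left( \nabla_{\mathbf{x}} \mathcal{L}(\mathbf{x}_t) \right)^T (\mathbf{x}_{t-1} - \mathbf{x}_{t}) \geq -\| \nabla_{\mathbf{x}} \mathcal{L}(\mathbf{x}_t) \| \, \| \mathbf{x}_{t-1} - \mathbf{x}_t \|$, and then bound $\| \nabla_{\mathbf{x}} \mathcal{L}(\mathbf{x}_t) \|$ via Lemma \ref{theorem4} by $\sqrt{\lambda_{max}(\mathbf{J}_g(\mathbf{x})\mathbf{J}_g(\mathbf{x})^T)} \cdot L_g$. This yields the intermediate estimate
\begin{equation}
    \mathcal{L}(\mathbf{x}_{t-1}) - \mathcal{L}(\mathbf{x}_{t}) \geq \frac{\beta^{\prime}}{2} \| \mathbf{x}_{t-1} - \mathbf{x}_t \|^2 - \sqrt{\lambda_{max}(\mathbf{J}_g(\mathbf{x})\mathbf{J}_g(\mathbf{x})^T)} \cdot L_g \cdot \| \mathbf{x}_{t-1} - \mathbf{x}_t \|.
\end{equation}

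The remaining step is to evaluate the displacement norm $\| \mathbf{x}_{t-1} - \mathbf{x}_t \|$ from the update rule. Writing $\mathbf{x}_{t-1} - \mathbf{x}_t = (\mathbf{x}_{t-1} - u_{\theta}(\mathbf{x}_t)) + (u_{\theta}(\mathbf{x}_t) - \mathbf{x}_t)$, the last line of Algorithm \ref{alg: DSG} places $\mathbf{x}_{t-1}$ on the sphere $S^n_{u_\theta, \sqrt{n}\tilde\sigma_t}$ used in the proof of Theorem \ref{upperDSG}, so that $\| \mathbf{x}_{t-1} - u_{\theta}(\mathbf{x}_t) \| = \sqrt{n}\tilde\sigma_t$, while $k := \| u_{\theta}(\mathbf{x}_t) - \mathbf{x}_t \|$ is the positive constant measuring the drift of the unconditional DDIM mean. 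Taking these two displacements to be aligned I would set $\| \mathbf{x}_{t-1} - \mathbf{x}_t \| = \sqrt{n}\tilde\sigma_t + k$, substitute into the intermediate estimate, and expand $(\sqrt{n}\tilde\sigma_t + k)^2 = n\tilde\sigma_t^2 + 2k\sqrt{n}\tilde\sigma_t + k^2$; collecting terms then reproduces exactly the claimed lower bound $\frac{\beta^{\prime} n \tilde\sigma_t^2}{2} + \beta^{\prime} k \sqrt{n}\tilde\sigma_t - (k + \sqrt{n}\tilde\sigma_t)(\sqrt{\lambda_{max}(\mathbf{J}_g(\mathbf{x})\mathbf{J}_g(\mathbf{x})^T)} \cdot L_g) + \frac{\beta^{\prime} k^2}{2}$.

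The main obstacle I anticipate is justifying the equality $\| \mathbf{x}_{t-1} - \mathbf{x}_t \| = \sqrt{n}\tilde\sigma_t + k$: the triangle inequality only delivers $\| \mathbf{x}_{t-1} - \mathbf{x}_t \| \le \sqrt{n}\tilde\sigma_t + k$ unless the normalized guidance direction $d_m/\|d_m\|$ aligns with $u_\theta(\mathbf{x}_t) - \mathbf{x}_t$, so some care (or an explicit worst-case/aligned assumption, consistent with the guidance pushing along the drift) is needed to turn the geometry into the exact expression driving the bound. A secondary point is simply to confirm that the step size $r$ in Algorithm \ref{alg: DSG} is taken equal to the sphere radius $\sqrt{n}\tilde\sigma_t$, which is precisely what makes the $\sqrt{n}\tilde\sigma_t$ terms appear; once the displacement norm is fixed, everything else is the routine substitution and quadratic expansion described above.
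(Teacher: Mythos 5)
Your proposal is correct and follows essentially the same route as the paper: the same second-order Taylor expansion with $H(\mathcal{L}(c)) \succeq \beta^{\prime} I$, the same decomposition of $\mathbf{x}_{t-1} - \mathbf{x}_t$ through $u_{\theta}(\mathbf{x}_t)$ with $\|\mathbf{x}_{t-1} - u_{\theta}(\mathbf{x}_t)\| = \sqrt{n}\tilde{\sigma}_t$, and the same invocation of Lemma \ref{theorem4}, with your Cauchy--Schwarz-then-substitute organization being only a cosmetic repackaging of the paper's termwise computation of $(*)$ and $(**)$. The alignment issue you flag is resolved in the paper exactly as you anticipate: it explicitly sets $u_{\theta}(\mathbf{x}_t) - \mathbf{x}_t = -kd$ with $d = \nabla_{\mathbf{x}}\mathcal{L}(\mathbf{x}_t)/\|\nabla_{\mathbf{x}}\mathcal{L}(\mathbf{x}_t)\|_2$, justifying this configuration as achievable via the randomness in $u_{\theta}(\mathbf{x}_t)$, which makes your Cauchy--Schwarz step tight and your displacement-norm equality $\|\mathbf{x}_{t-1} - \mathbf{x}_t\| = \sqrt{n}\tilde{\sigma}_t + k$ exact.
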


\begin{proof}

\begin{equation}
    \begin{split}
        \mathcal{L}(\mathbf{x}_{t-1}) -  \mathcal{L}(\mathbf{x}_{t}) & = \left ( \nabla_{\mathbf{x}} \mathcal{L}(\mathbf{x}_t) \right)^T (\mathbf{x}_{t-1} - \mathbf{x}_{t}) + \frac{1}{2} (\mathbf{x}_{t-1} - \mathbf{x}_{t})^T H(\mathcal{L}(c)) (\mathbf{x}_{t-1} - \mathbf{x}_{t})\\
        & \geq \underset{(*)}{\underbrace{\left ( \nabla_{\mathbf{x}} \mathcal{L}(\mathbf{x}_t) \right)^T (\mathbf{x}_{t-1} - \mathbf{x}_{t})}} + \underset{(**)}{\underbrace{\frac{\beta^{\prime}}{2} (\mathbf{x}_{t-1} - \mathbf{x}_{t})^T (\mathbf{x}_{t-1} - \mathbf{x}_{t})}}
    \end{split}
    \label{DSG-low1}
    \end{equation}

In Eq.(\ref{DSG-low1}), $\mathbf{x}_{t-1} = \arg \min_{\mathbf{x}} \left ( \nabla_{\mathbf{x}} \mathcal{L}(\mathbf{x}_t) \right)^T (\mathbf{x} - \mathbf{x}_{t})$ where $\mathbf{x} \in S_{u_{\theta}, \sqrt{n} \tilde{\sigma}_t}^{n}$. In this case, we have $\mathbf{x}_{t-1} = u_{\theta}(\mathbf{x}_{t}) - \sqrt{n} \tilde{\sigma}_t d$ where $d = \nabla_{\mathbf{x}} \mathcal{L}(\mathbf{x}_t) / \| \nabla_{\mathbf{x}} \mathcal{L}(\mathbf{x}_t) \|_2$. For $(*)$, we have 

\begin{equation}
\begin{split}
    (*)  &= \left ( \nabla_{\mathbf{x}} \mathcal{L}(\mathbf{x}_t) \right)^T (\mathbf{x}_{t-1}  - u_{\theta}(\mathbf{x}_{t}) + u_{\theta}(\mathbf{x}_{t}) - \mathbf{x}_{t})\\
    &= -\sqrt{n} \tilde{\sigma}_t \| \nabla_{\mathbf{x}} \mathcal{L}(\mathbf{x}_t) \|_2 + \left ( \nabla_{\mathbf{x}} \mathcal{L}(\mathbf{x}_t) \right)^T \left(u_{\theta}(\mathbf{x}_{t}) - \mathbf{x}_{t}\right).
\end{split}
\end{equation}

For $(**)$, we have

\begin{equation}
    \begin{split}
        (**) &= \frac{\beta^{\prime}}{2} (\mathbf{x}_{t-1} - \mathbf{x}_{t})^T (\mathbf{x}_{t-1} - \mathbf{x}_{t}) = \frac{\beta^{\prime}}{2} (\mathbf{x}_{t-1} - u_{\theta}(\mathbf{x}_{t}))^T (\mathbf{x}_{t-1} - u_{\theta}(\mathbf{x}_{t})) \\
        & \quad +\frac{\beta^{\prime}}{2} (u_{\theta}(\mathbf{x}_{t})-\mathbf{x}_{t})^T (u_{\theta}(\mathbf{x}_{t})-\mathbf{x}_{t}) + \mu (\mathbf{x}_{t-1} - u_{\theta}(\mathbf{x}_{t}))^T \left(u_{\theta}(\mathbf{x}_{t}) - \mathbf{x}_{t}\right)\\
        &=\frac{\beta^{\prime} n \tilde{\sigma}_t^2}{2} + \frac{\beta^{\prime}}{2} \left ( 2\mathbf{x}_{t-1} - \mathbf{x}_{t} - u_{\theta}(\mathbf{x}_{t})\right)^T\left(u_{\theta}(\mathbf{x}_{t}) - \mathbf{x}_{t}\right)\\
    &= \frac{\beta^{\prime} n \tilde{\sigma}_t^2}{2} + \frac{\beta^{\prime}}{2} \left ( \mathbf{x}_{t-1} - \mathbf{x}_{t} \right)^T\left(u_{\theta}(\mathbf{x}_{t}) - \mathbf{x}_{t}\right) + \frac{\beta^{\prime}}{2} \left ( \mathbf{x}_{t-1} - u_{\theta}(\mathbf{x}_{t})\right)^T\left(u_{\theta}(\mathbf{x}_{t}) - \mathbf{x}_{t}\right).
    \end{split}
    \label{(**)}
\end{equation}
    To derive the lower bound of $\mathcal{L}(\mathbf{x}_{t-1}) -  \mathcal{L}(\mathbf{x}_{t})$, the common factor is $u_{\theta}(\mathbf{x}_{t}) - \mathbf{x}_{t}$. The part $\frac{\beta^{\prime}}{2} \left ( \mathbf{x}_{t-1} - \mathbf{x}_{t} \right)^T\left(u_{\theta}(\mathbf{x}_{t}) - \mathbf{x}_{t}\right)$ and part $\frac{\beta^{\prime}}{2} \left ( \mathbf{x}_{t-1} - u_{\theta}(\mathbf{x}_{t})\right)^T\left(u_{\theta}(\mathbf{x}_{t}) - \mathbf{x}_{t}\right)$ in Eq.(\ref{(**)}) are two inner products. To explain how the denoising ability can reduce the attack loss, we want to find a value that can be actually reached. Noting the randomness in $u_{\theta}(\mathbf{x}_{t})$, by setting $u_{\theta}(\mathbf{x}_{t}) - \mathbf{x}_{t} = -kd (k>0)$, the part $\frac{\beta^{\prime}}{2} \left ( \mathbf{x}_{t-1} - u_{\theta}(\mathbf{x}_{t})\right)^T\left(u_{\theta}(\mathbf{x}_{t}) - \mathbf{x}_{t}\right)$ in Eq.(\ref{(**)}) can be expressed :
    \begin{equation}
    \begin{split}
        \frac{\beta^{\prime}}{2} \left ( \mathbf{x}_{t-1} - u_{\theta}(\mathbf{x}_{t})\right)^T\left(u_{\theta}(\mathbf{x}_{t}) - \mathbf{x}_{t}\right) &= - \frac{\beta^{\prime}}{2} (\sqrt{n} \tilde{\sigma}_t d)^T (-kd)\\
        & = \frac{\beta^{\prime} k \sqrt{n} \tilde{\sigma}_t }{2}.
    \end{split}  
    \end{equation}

    Then the part $\frac{\beta^{\prime}}{2} \left ( \mathbf{x}_{t-1} - \mathbf{x}_{t} \right)^T\left(u_{\theta}(\mathbf{x}_{t}) - \mathbf{x}_{t}\right)$ in Eq.(\ref{(**)}) is 
    \begin{equation}
        \begin{split}
            \frac{\beta^{\prime}}{2} \left ( \mathbf{x}_{t-1} - \mathbf{x}_{t} \right)^T\left(u_{\theta}(\mathbf{x}_{t}) - \mathbf{x}_{t}\right) &= \frac{\beta^{\prime}}{2} \left ( \mathbf{x}_{t-1} - u_{\theta}(\mathbf{x}_{t}) -kd \right)^T\left(-kd\right)\\ & = - \frac{\beta^{\prime}}{2} \left( (\sqrt{n} \tilde{\sigma}_t - k) d \right)^T(-kd) = \frac{\beta^{\prime} k(\sqrt{n} \tilde{\sigma}_t + k)}{2}. 
        \end{split}
    \end{equation}

The $(*)$ is 
\begin{equation}
\begin{split}
    (*) & = -\sqrt{n} \tilde{\sigma}_t \| \nabla_{\mathbf{x}} \mathcal{L}(\mathbf{x}_t) \|_2 + \left ( \nabla_{\mathbf{x}} \mathcal{L}(\mathbf{x}_t) \right)^T \left(u_{\theta}(\mathbf{x}_{t}) - \mathbf{x}_{t}\right)\\
    &= -\sqrt{n} \tilde{\sigma}_t \| \nabla_{\mathbf{x}} \mathcal{L}(\mathbf{x}_t) \|_2 - k \| \nabla_{\mathbf{x}} \mathcal{L}(\mathbf{x}_t) \|_2.
\end{split}
\end{equation}
    Finally, we have
\begin{equation}
    \begin{split}
        \mathcal{L}(\mathbf{x}_{t-1}) -  \mathcal{L}(\mathbf{x}_{t}) & \geq \frac{\beta^{\prime} n \tilde{\sigma}_t^2}{2} +\beta^{\prime} k \sqrt{n} \tilde{\sigma}_t - (k + \sqrt{n} \tilde{\sigma}_t) \| \nabla_{\mathbf{x}} \mathcal{L}(\mathbf{x}_t) \|_2 + \frac{\beta^{\prime} k^2}{2}\\
        & \geq \frac{\beta^{\prime} n \tilde{\sigma}_t^2}{2} +\beta^{\prime} k \sqrt{n} \tilde{\sigma}_t - (k + \sqrt{n} \tilde{\sigma}_t) (\sqrt{\lambda_{max}(\mathbf{J}_g(\mathbf{x})\mathbf{J}_g(\mathbf{x})^T)} \cdot L_g) + \frac{\beta^{\prime} k^2}{2}.
    \end{split}
    \label{DSG-low2}
    \end{equation}
\end{proof}

To better understand the proof, see Figure \ref{noise_lap_appendix} for details. If $u_{\theta}(\mathbf{x}_{t}) - \mathbf{x}_{t} = -kd (k>0)$ is not satisfied, it is the case in Figure \ref{DSG_01}. Due to the randomness of $\mathbf{x}_{t-1}$, $u_{\theta}(\mathbf{x}_{t}) - \mathbf{x}_{t} = -kd (k>0)$ can be satisfied which is the case in Figure \ref{DSG_02}. In this case, $\mathcal{L}(\mathbf{x}_{t-1}) -  \mathcal{L}(\mathbf{x}_{t})$ reaches its lower bound.

Analogous to the discussion about Theorem \ref{lowerboundDSG}, for Eq.(\ref{DSG-low2}), the information loss associated with the mapping $\mathbf{x} \mapsto g$ is expressed by the term $\sqrt{\lambda_{max}(\mathbf{J}_g(\mathbf{x})\mathbf{J}_g(\mathbf{x})^T)}$. The information loss is also scaled by $\tilde{\sigma}_t$, which is a coefficient in the denoising steps of diffusion models. From the fact $\tilde{\sigma}_t \cdot L_g \leq \frac{d \tilde{\sigma}_t}{\sqrt{2 \pi \sigma^2}}$, for a large differentially private noise scale $\sigma^2$, the term $\tilde{\sigma}_t$ can be chosen to be sufficiently large, ensuring that $\frac{d \cdot \tilde{\sigma}_t}{\sqrt{2 \pi \sigma^2}}$ remains large, which indicates the small lower bound of $\mathcal{L}(\mathbf{x}_{t-1}) - \mathcal{L}(\mathbf{x}_{t})$.

\begin{figure*}[htp!]
    \centering
    \vspace{-2mm}
    \subfigure[$u_{\theta}(\mathbf{x}_{t}) - \mathbf{x}_{t} \neq -kd (k>0)$]{
		\includegraphics[width=0.45\textwidth]{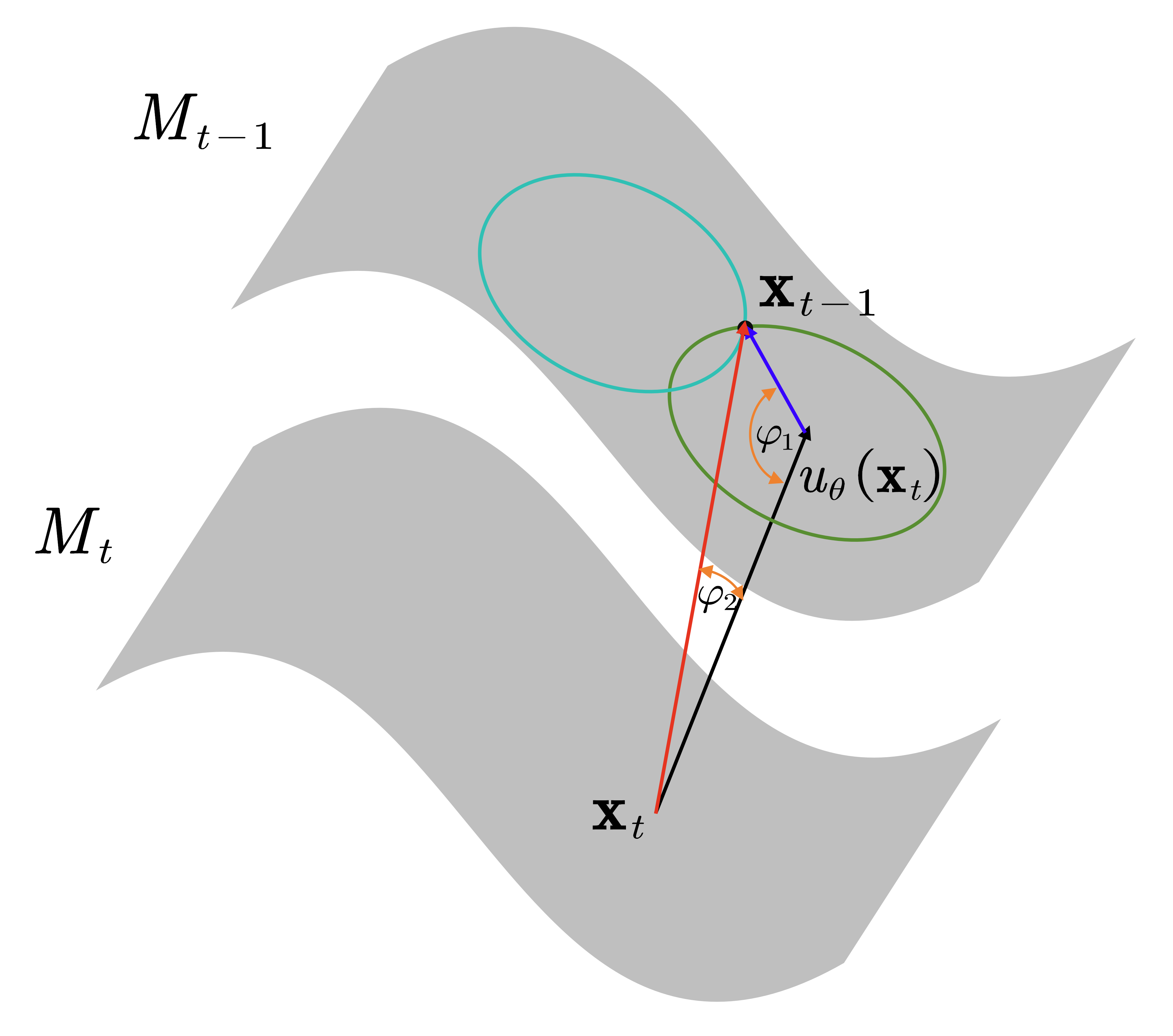}
		\label{DSG_01}
    }
    \subfigure[$u_{\theta}(\mathbf{x}_{t}) - \mathbf{x}_{t} = -kd (k>0)$]{
		\includegraphics[width=0.45\textwidth]{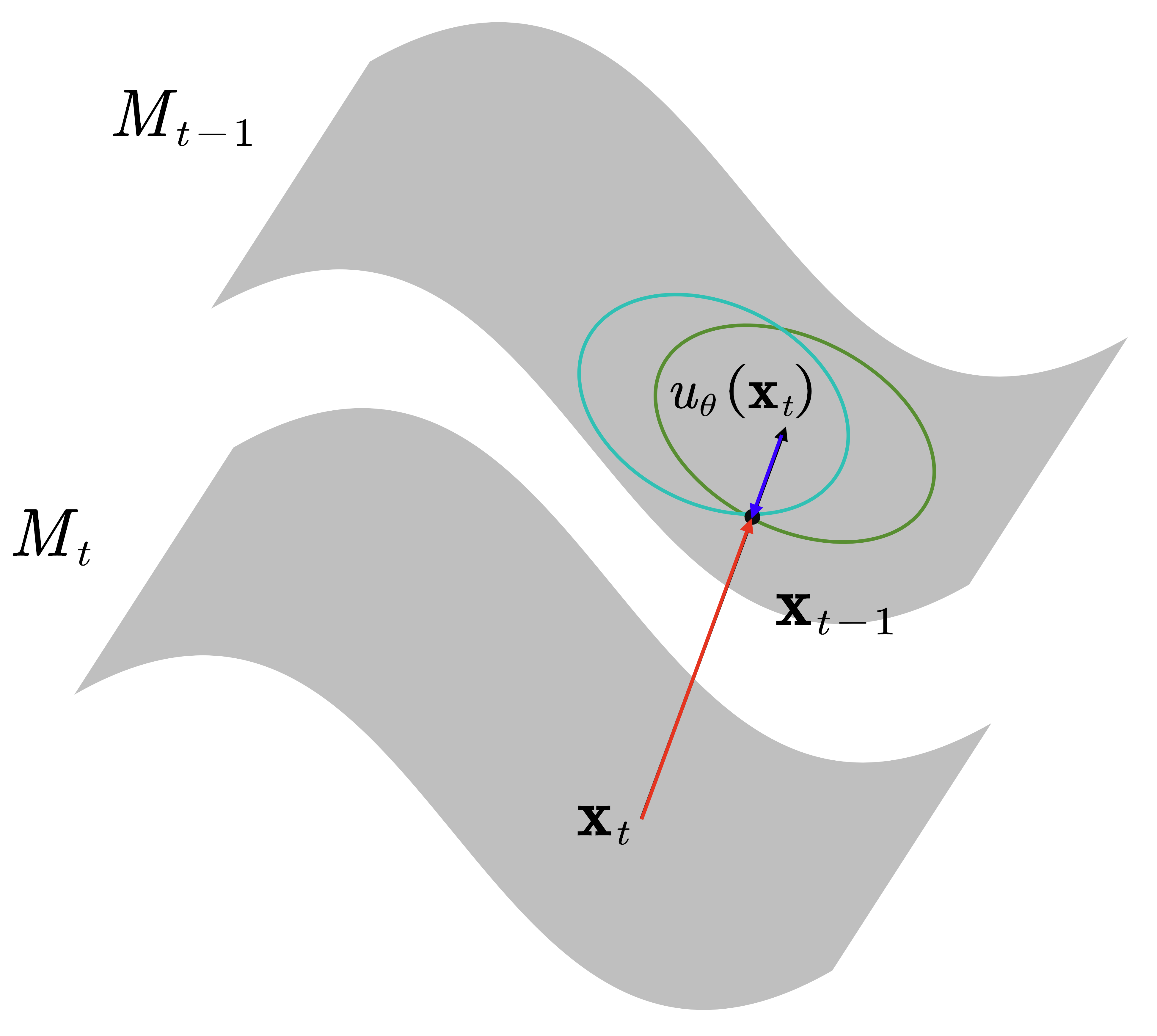}
		\label{DSG_02}
    }
    \caption{Explanations for Theorem \ref{DSGlower}}
    \label{noise_lap_appendix}
\end{figure*}

\section{Experiments}

\subsection{Experimental Setup}
\textbf{Machine Configuration.} All experiments are run over a GPU machine with one Intel(R) Xeon(R) Gold 5218R CPU @ 2.10GHz with 251 GB DRAM and 4 NVIDIA RTX A6000 GPUs (each GPU has 48 GB DRAM). We implement our attack with Pytorch 2.0.1.

\textbf{Datasets and Pre-trained Diffusion Model.} We select CelebA-HQ \cite{celeba} as experimental dataset, the resolution of images in which is $256 \times 256$ pixels. The pre-trained diffusion model used in our methods is taken from \cite{pre-trained_diffusion}, the resolution of inputs of which is $256 \times 256$ pixels.

\textbf{The Attacked Models.} In the experiment, we use several models under attack, including MLP\_1, MLP\_2, MLP\_3, CNN and ResNet9. The resolution of the input images for these models are $256 \times 256$ pixels. And the architectures of them are as follows:

\textit{MLP\_1.} MLP\_1 consists of three fully connected layers. The input is flattened to $3 \times 256 \times 256$ and passed through a 512-unit layer, followed by ReLU activation. The second layer has 128 units with ReLU activation, and the output layer has 2 units for classification.

\textit{MLP\_2.} MLP\_2 has four fully connected layers. The input is flattened and passed through layers of 512, 256, and 64 units, each followed by ReLU activation. The final output layer has 2 units for classification.

\textit{MLP\_3.} MLP\_3 consists of five fully connected layers. The input is flattened and passed through layers of 1028, 512, 256, and 64 units, each followed by ReLU activation. The output layer has 2 units for classification.

\textit{CNN.} The CNN architecture consists of two convolutional layers. The first layer has 64 filters, and the second has 128 filters, both with $3 \times 3$ kernels, followed by ReLU activation and 2x2 MaxPooling for downsampling. After flattening, a fully connected layer with 256 units and ReLU activation is applied, followed by a final output layer with 2 units and softmax for classification. In our experiment, the default attacked model is CNN.

\textit{ResNet9.} ResNet9 is a simplified version of the standard ResNet, consisting of two convolutional layers, two residual blocks, and two additional convolutional layers. It reduces the number of residual blocks compared to the standard ResNet and uses fewer convolutional layers for feature extraction. The model ends with a global average pooling layer and a fully connected layer for classification.

\textbf{Evaluation Metrics.} We adopt \textit{MSE (Mean Square Error)}, \textit{SSIM (Structural Similarity Index)}, \textit{PSNR (Peak Signal-to-Noise Ratio)}, and \textit{LPIPS (Learned Perceptual Image Patch Similarity)} to measure the quality of the reconstructed images, which are commonly used in computer vision.

\textit{MSE.} MSE is used to quantify the difference between a target image and the relevant reconstructed image. A lower MSE value indicates that the reconstructed image is closer to the target image, implying better reconstruction quality. 

\textit{SSIM.} SSIM is also used for measuring the similarity between two images, which is considered to be more perceptually relevant than traditional methods like MSE since it incorporates changes in structural information, luminance, and contrast. SSIM ranges from -1 to 1. A value of 1 indicates perfect similarity, and it is desirable to have a higher SSIM when evaluating reconstruction quality.

\textit{PSNR.} PSNR is a popular metric used the assessment of image reconstruction quality. It measures the ratio between the maximum possible power of a signal (in our experiments, the target image) and the power of distorting noise that affects its representation (in our experiments, the reconstructed image). A higher PSNR value implies that the reconstruction is of higher quality. In contrast, a lower PSNR means more error or noise.

\textit{LPIPS.} Unlike SSIM and MSE, which are handcrafted metrics, LPIPS leverages deep learning to more closely align with human perceptual judgments. LPIPS measures perceptual similarity by extracting deep features from images using a pre-trained convolutional neural network (CNN) and calculating the distance (typically Euclidean) among these feature representations. A lower LPIPS score denotes higher similarity and better reconstruction quality.


\subsection{Results and Discussion}
\subsubsection{Reconstructions Without Differentially Private Noise}
In this subsection, we present our experiment results when the attacker captures the original gradient. The original gradient is also denoted as the target gradient from which the attacker aims to extract the corresponding private image. In the following parts, $\hat{\mathbf{x}}_0(\mathbf{x}_t)$ represents the prediction of $x_0$ which is inferred via timestep $t$ and the output $\mathbf{x}_t$ of the reverse process at timestep $t$.

Figure \ref{reconstruction_x} presents the Euclidean distance between the gradient of $\hat{\mathbf{x}}_0(\mathbf{x}_t)$ and the target gradient where $\hat{\mathbf{x}}_0(\mathbf{x}_t)$ is the reconstructed image derived from the conditional diffusion attack model, and the MSE distance between $\hat{\mathbf{x}}_0(\mathbf{x}_t)$ and the target private image. Additionally, images $\hat{\mathbf{x}}_0(\mathbf{x}_t)$ and the target private image are also displayed. Figure \ref{x_process} shows the variation process of images $x_{t}$ from the initial stage to the final result during the reverse process of the conditional diffusion attack model. From Figure \ref{reconstruction_x} and Figure \ref{x_process}, the proposed two gradient-guided methods are extremely effective in extracting the private image from the original gradient. As the gradient's distance decreases, the MSE distance between the reconstructed image and the target private image also decreases and the reconstructed image gradually approaches the private image. 

\begin{figure*}[htp!]
    \centering
    \vspace{-2mm}
    \subfigure[Reconstruction Process of DPS-based Method]{
		\includegraphics[width=0.48\textwidth]{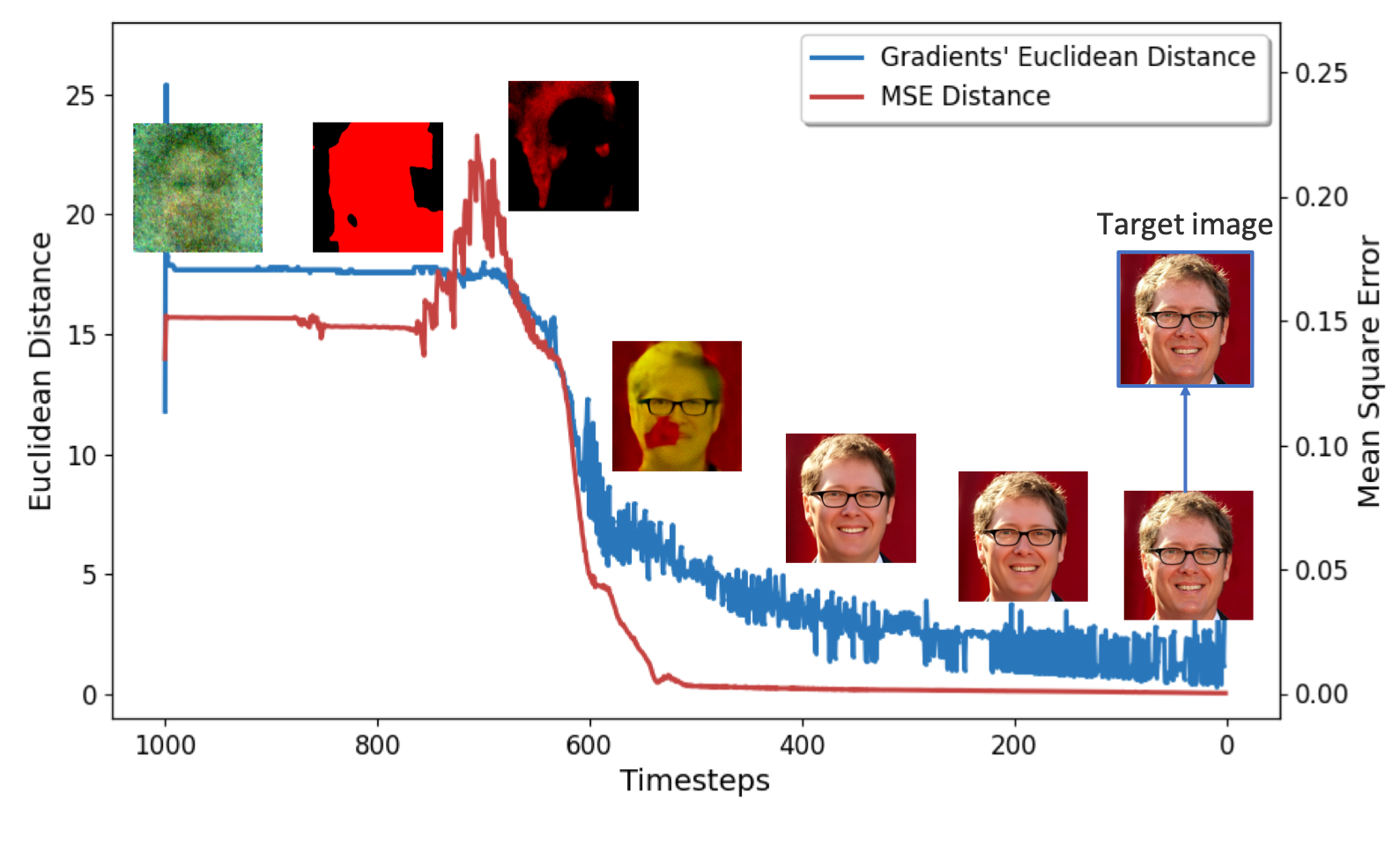}
		\label{reconstruction_dps_x0}
    }
    \subfigure[Reconstruction Process of DSG-based Method]{
		\includegraphics[width=0.48\textwidth]{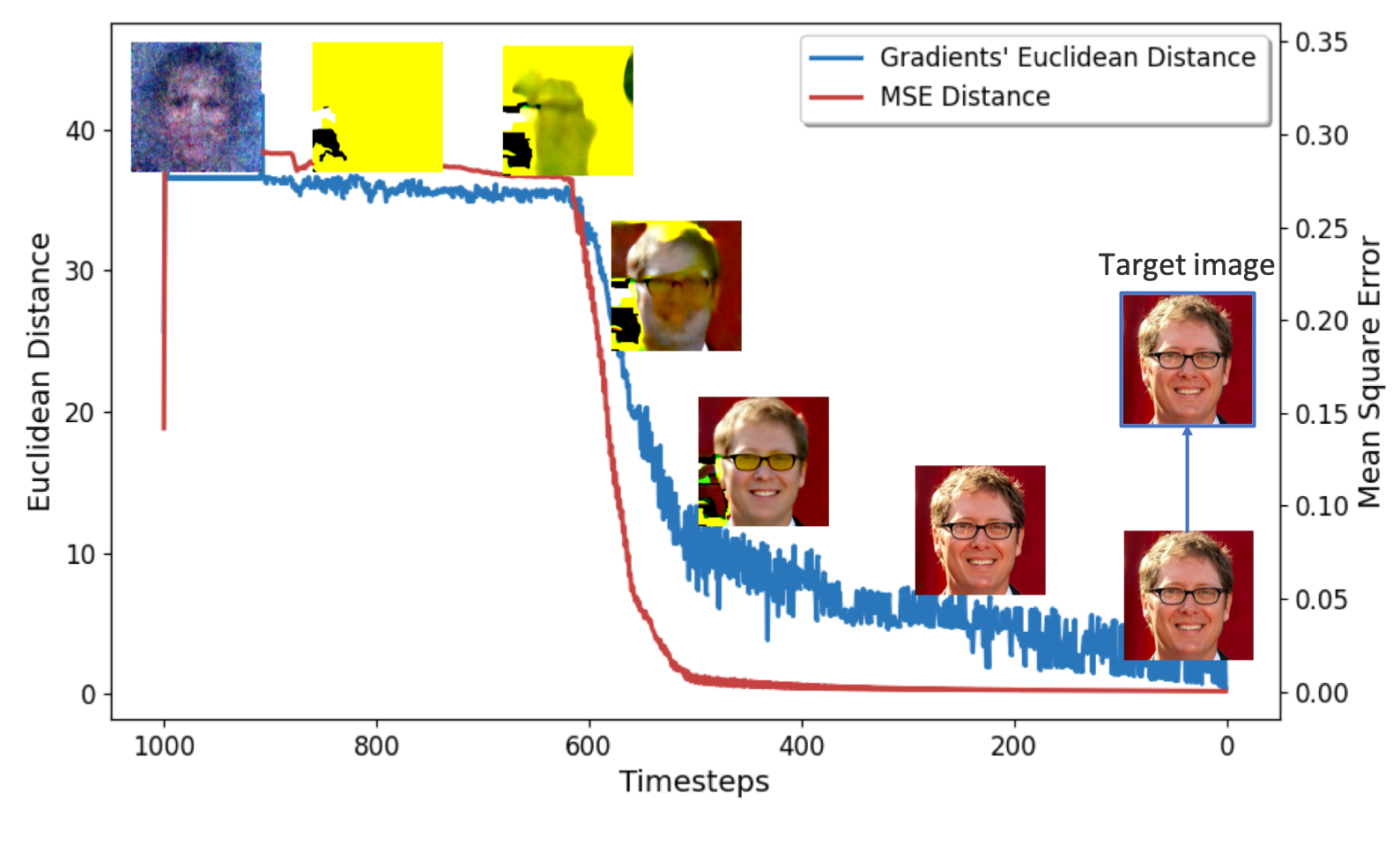}
		\label{reconstruction_dsg_x0}
    }
    \caption{Reconstruction Processes}
    \label{reconstruction_x}
\end{figure*}

\begin{figure*}[htbp]
  \centering
  \includegraphics[scale=0.5]{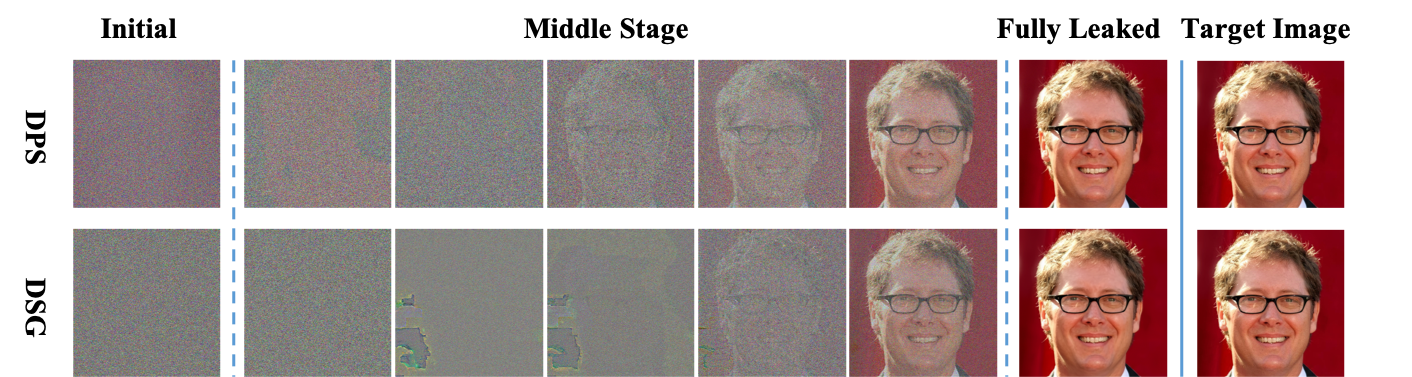}
  \caption{Images $x_{t}$ of Reconstruction Process}
  \label{x_process}
\end{figure*}

\subsubsection{Impacts of Gaussian noise scale \texorpdfstring{$\sigma^2$}{sigma2}}

\textbf{Noisy Gradients.} Previous studies on differential privacy have explored adding Gaussian noise directly to gradients to mitigate image reconstruction attacks. To evaluate the robustness of our attack methods under similar conditions, we adopt this approach by perturbing the original gradients with Gaussian noise. The adversary receives the perturbed gradient $g^{\prime} = g + \mathcal{N}(0,\sigma^2)$, where $\mathcal{N}(0,\sigma^2)$ represents Gaussian noise with variance $\sigma^2$. Our experiments aim to determine whether the added noise effectively impedes our attack methods. 

To quantitatively assess the impact of Gaussian noise on reconstruction performance, we use the Peak Signal-to-Noise Ratio (PSNR) metric, which is sensitive to differences in image quality caused by noise, provides a clear measure of reconstruction fidelity, and exhibits relatively large numerical variation across different experimental results, making it easier to observe. Figure \ref{do_not_normalize_noise} illustrates the PSNR values of the reconstructed images $\hat{\mathbf{x}}_0(\mathbf{x}_t)$ compared to the target images over varying noise magnitudes $\sigma^2$. Additionally, we not only provide the images of $x_t$ at the end of the reverse processes, but we also display the visual examples of the predicted images $\hat{\mathbf{x}}_0(\mathbf{x}_t)$ at the points where the highest PSNR is achieved during the reverse denoising process, highlighting the peak performance.

Figure \ref{do_not_normalize_noise} demonstrates that our methods remain effective at reconstructing images nearly identical to the original private images when the added Gaussian noise has a small variance ($\sigma^2$ is small). However, as the noise magnitude increases, there is a noticeable degradation in the quality of the reconstructed images. This degradation indicates that higher levels of noise impair the attacker's ability to accurately reconstruct the original image, aligning with the theoretical analysis in Section \ref{sec:4.2}, which predicts that increased noise introduces uncertainty hindering precise reconstruction.


\begin{figure*}[htp!]
    \centering
    \vspace{-2mm}
    \subfigure[DPS-based Method]{
		\includegraphics[width=0.48\textwidth]{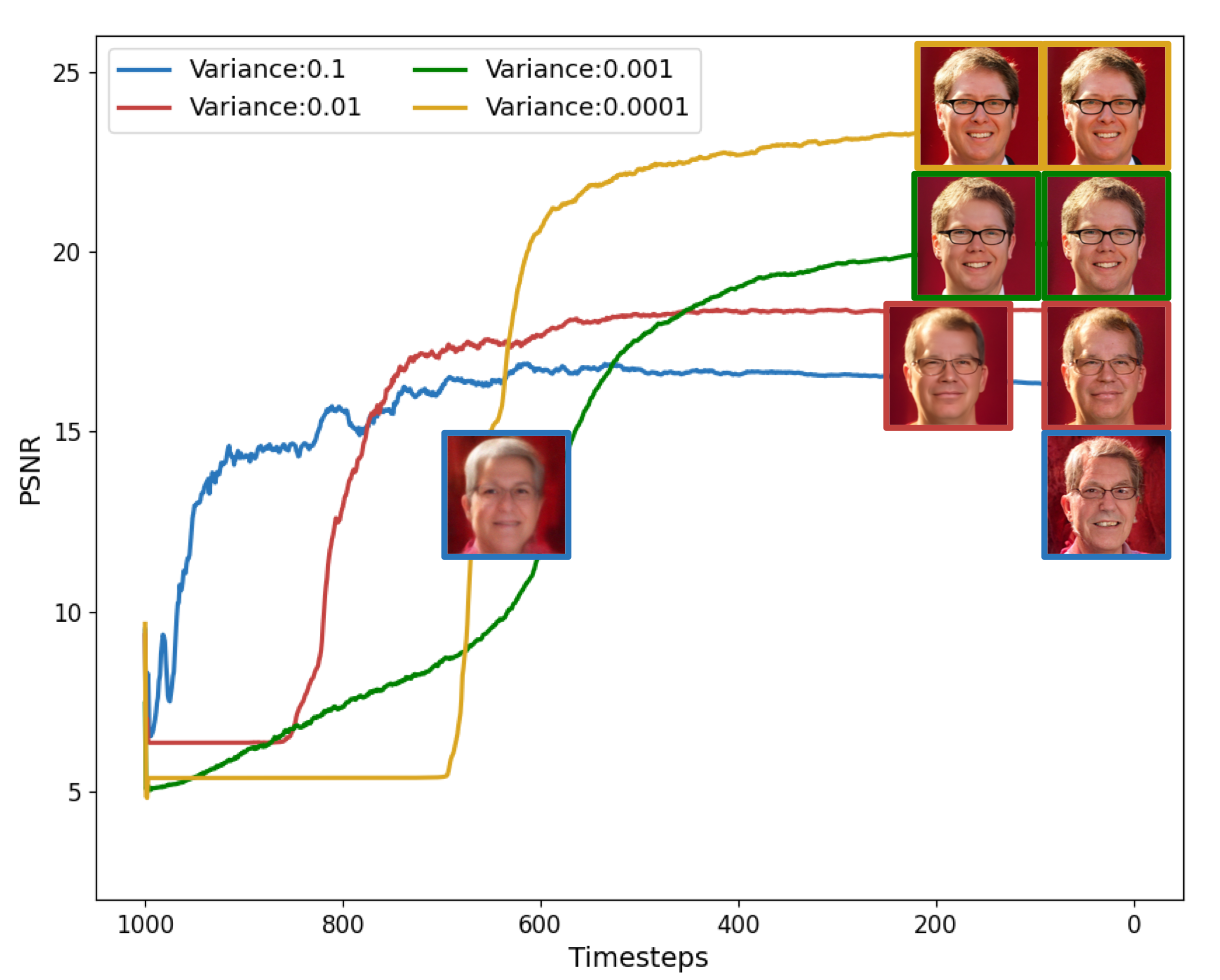}
		\label{do_not_normalize_noise_DPS}
    }
    \subfigure[DSG-based Method]{
		\includegraphics[width=0.48\textwidth]{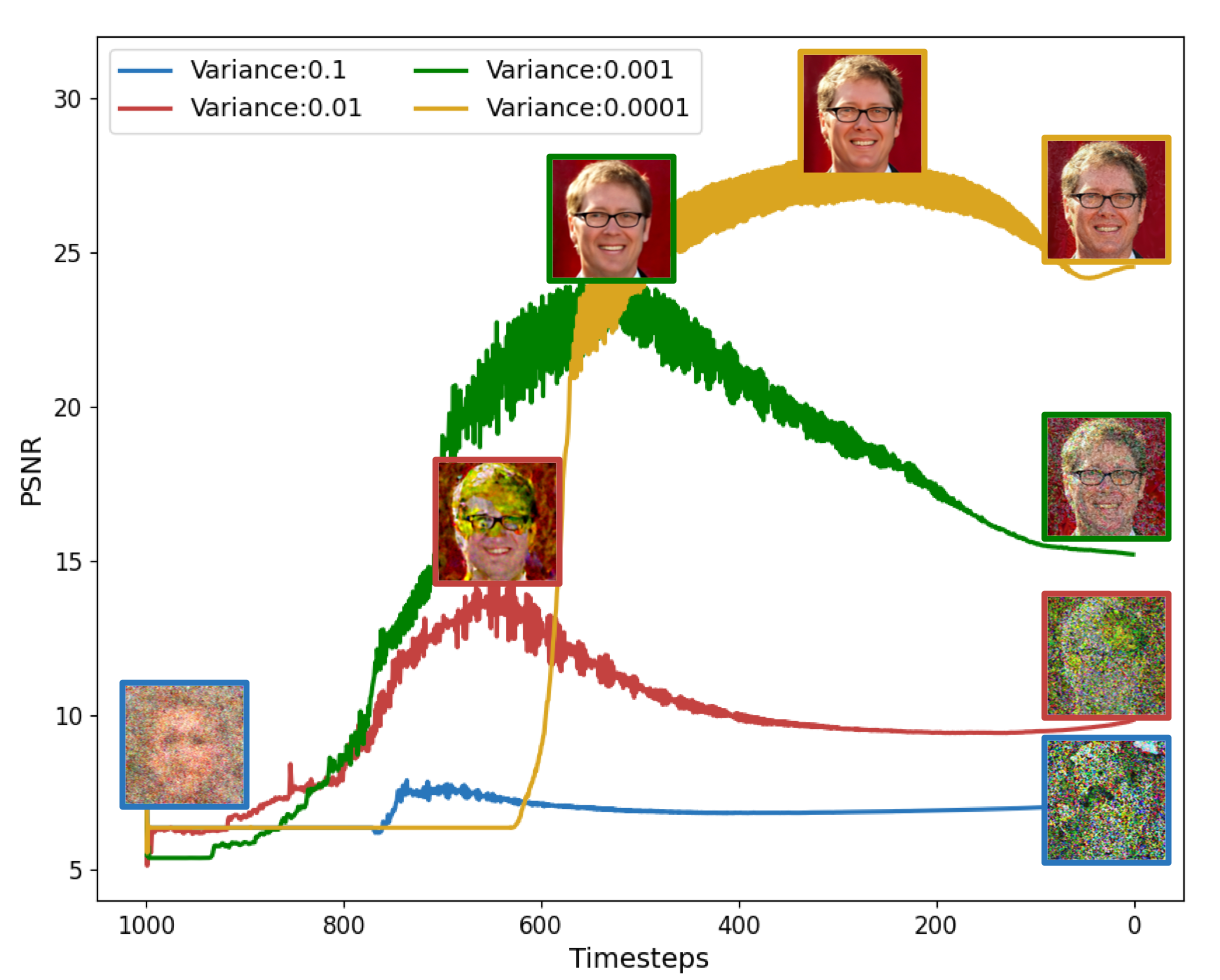}
		\label{do_not_normalize_noise_DSG}
    }
    \caption{Reconstruction with noisy gradients.}
    \label{do_not_normalize_noise}
\end{figure*}

\textbf{Noisy Normalized Gradients.} In the previous experiments, adding Gaussian noise directly to the gradients has not considered the gradients' sensitivity or the formal privacy guarantees provided by differential privacy. This omission prevents us from relating the noise variance directly to differential privacy parameters $(\varepsilon, \delta)$ since the sensitivity of the original gradients is unknown. To address this limitation, we now evaluate the impact of differential privacy-compliant noise addition on our attack methods.

We first normalize the original gradients to have a bounded sensitivity, specifically restricting their sensitivity to 1. This normalization ensures that the gradients meet the requirements for applying differential privacy mechanisms. We add Gaussian noise calibrated according to the differential privacy parameters. The adversary receives perturbed gradients computed as: $g^{\prime} = \text{Normalize}(g) + \mathcal{N}(0,\sigma^2)$.

By varying the privacy parameters $(\varepsilon, \delta)$, we can directly relate the noise magnitude to the desired privacy level and observe how different levels of privacy protection affect the performance of our reconstruction attacks. The reconstruction performance under this setting is presented in Figure \ref{normalize_noise1}. Figure \ref{normalize_noise1} illustrates that as the differential privacy parameter $\varepsilon$ decreases (implying stronger privacy guarantees), the reconstruction performance of both attack methods declines. This trend is consistent with our theoretical expectations in Section \ref{sec:4.2}, where increased noise (necessary for stronger privacy) obscures more information from the gradients, making accurate reconstruction more challenging.

\begin{figure*}[htp!]
    \centering
    \vspace{-2mm}
    \subfigure[DPS-based Method]{
		\includegraphics[width=0.48\textwidth]{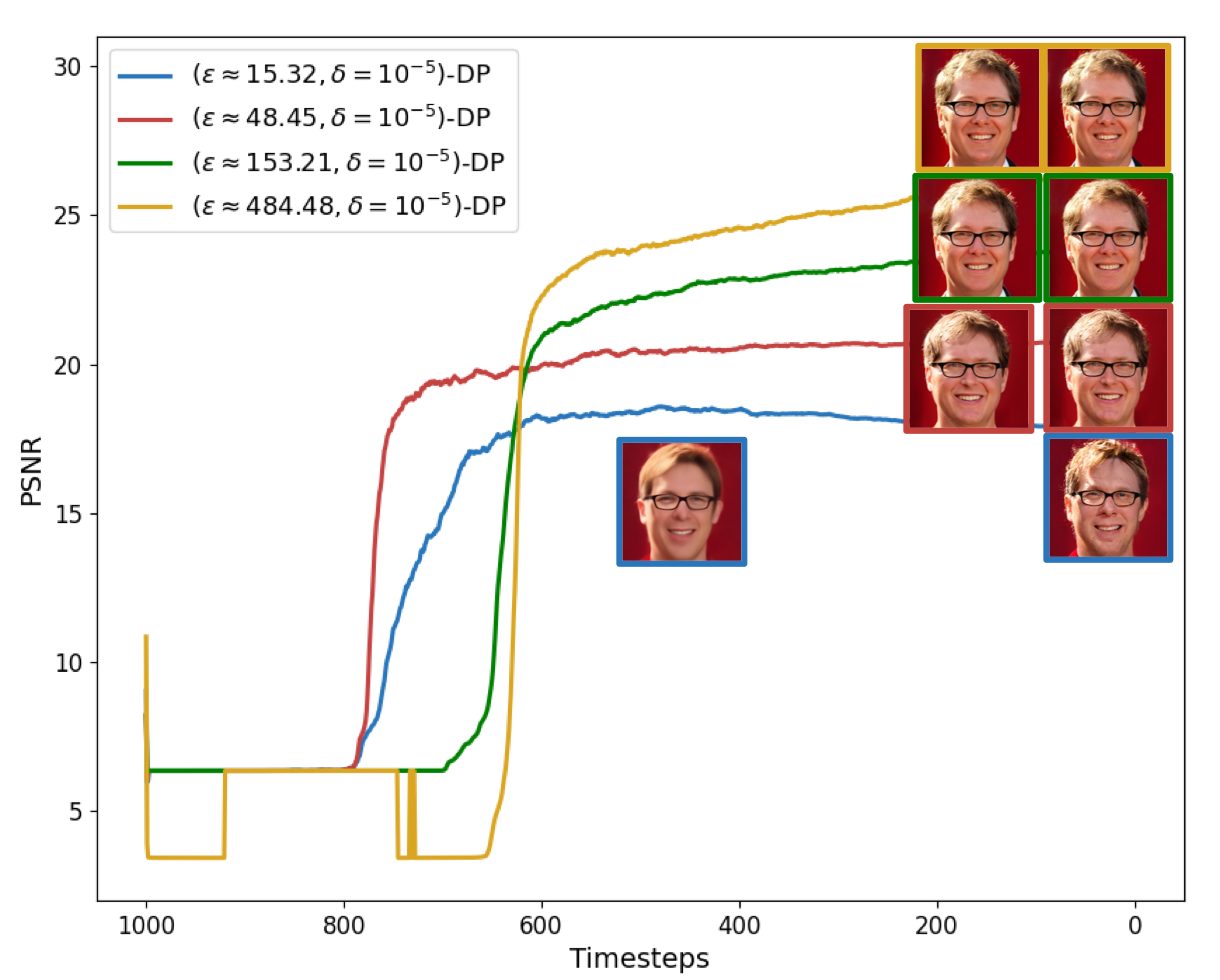}
		\label{normalize_noise_DPS}
    }
    \subfigure[DSG-based Method]{
		\includegraphics[width=0.48\textwidth]{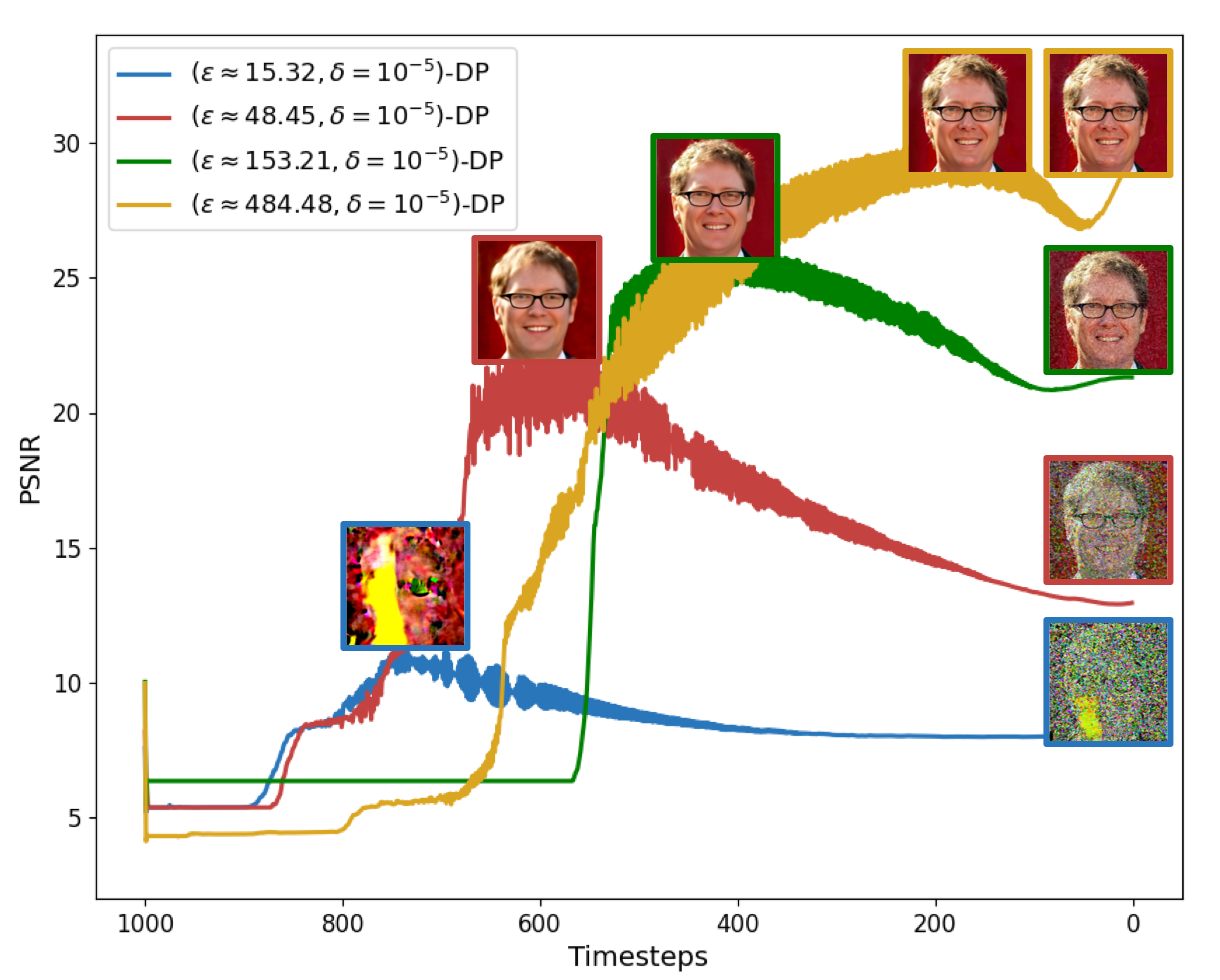}
		\label{normalize_noise_DSG}
    }
    \caption{Reconstruction with noisy normalized gradients.}
    \label{normalize_noise1}
\end{figure*}

Our analysis underscores the importance of carefully calibrating noise addition in accordance with differential privacy principles to protect against sophisticated reconstruction attacks. While increasing the noise magnitude (or decreasing $\varepsilon$) can degrade reconstruction performance, our attack methods can still extract significant information under certain conditions. Therefore, achieving a balance between model utility and privacy protection remains a critical challenge that warrants further research and optimization.

And we observe distinct behaviors between the DPS-based method and the DSG-based method: 
\begin{itemize}
    \item \textbf{DPS-based Method}: The DPS-based method's highest reconstruction quality typically appears near the final stages of the denoising process. Under strong privacy settings (small $\varepsilon$), the DPS method produces reconstructions that bear little resemblance to the original private images, only non-detailed and limited features can be reflected, such as a man with golden hair, wearing black glasses, and smiling in the image. This results in images that lack detailed features of the private data but appear less noisy overall.
    
    \item \textbf{DSG-based Method}: The DSG-based method often reaches its peak reconstruction quality during the middle stages of the reverse denoising process. As the denoising continues, the reconstructed images become progressively blurrier and noisier, suggesting that additional denoising does not improve—and may even degrade—the reconstruction quality. This "over-denoising" effect implies that the method begins to remove not only noise but also essential features of the image. However interestingly, despite the heavy noise, the original image is obscured by noise, and the faint outlines or structural features of the original images can still be roughly discerned.
\end{itemize}

These differing performances suggest that the DSG-based method retains some information about the original data even under significant noise perturbations. In contrast,the DPS-based method's reconstructions become more smooth and diverse, but less informative about the original image. The earlier peak performance of the DSG-based method suggests that an adversary might obtain better reconstructions by observing intermediate outputs rather than waiting for the final result. The performance of these two methods in reconstruction results is understandable. In DPS-based method, the reverse sampling values follow a Gaussian distribution, resulting in diverse and smooth reconstructions. However, in DSG-based method, the reverse process is deterministic, leading to many noise artifacts in the reconstruction at the end of the process (due to the addition of noise to the gradients), while still preserving the contours of the target image. The differing behaviors between the DSG-based method and the DPS-based method in terms of when they achieve optimal reconstruction highlight fundamental differences in how they utilize gradient information during the denoising process. Understanding these differences could provide valuable insights for developing more robust defenses against such attacks and present an intriguing direction for future research.

Moremore, the quantitative results in Table \ref{defense1} show that our attack methods yield higher-quality reconstructed images when using perturbed gradients obtained by adding noise after normalization—a common practice in differentially private gradient computations. This finding has significant implications for real-world applications:

\textit{Effectiveness of Current Practices}: Simply applying differential privacy mechanisms by adding noise to normalized gradients may not be sufficient to prevent reconstruction attacks, especially under weaker privacy settings (larger $\varepsilon$) where the noise may not effectively obfuscate gradient information.

\textit{Need for Stronger Privacy Guarantees}: Our experiments highlight the necessity of setting privacy parameters to enforce strong privacy guarantees. Under weaker settings, adversaries can still reconstruct high-quality images, compromising data confidentiality.

\textit{Potential for Additional Protections}: Practitioners should consider enhancing privacy protection by exploring additional or alternative techniques beyond standard differential privacy mechanisms. This includes combining multiple defense strategies or developing new methods specifically designed to counter advanced reconstruction attacks like those proposed in this work.

\begin{table*}[!htb]
    \centering
    \small
    \begin{tabular}{ccccccc}
    \hline
    \toprule
    
      & \textbf{Method}& \textbf{Variance} & \textbf{MSE$\downarrow$} & \textbf{SSIM$\uparrow$} & \textbf{PSNR$\uparrow$} & \textbf{LPIPS$\downarrow$} \\
    \midrule
            &  & $10^{-4}$ & 3.9420e-03  & 0.99977797 & 24.0428 & 1.8718e-05\\
            & DPS- & $10^{-3}$ &9.1194e-03 & 0.99924773 & 20.4003 & 4.4162e-05 \\
            & based & $10^{-2}$  & 1.4431e-02 & 0.99855620 & 18.4069 & 1.2940e-04\\
    Not  &  & $10^{-1}$ & 2.0473e-02 & 0.99773788 & 16.8882 & 2.0306e-04\\ \cline{2-7}
Normalize  & & $10^{-4}$ & 1.5746e-03 & 0.99993891 & 28.0284& 7.9936e-06\\
            & DSG- & $10^{-3}$ & 3.9366e-03&0.99972790&24.0488 & 4.5209e-05 \\
            & based & $10^{-2}$  &3.6552e-02 &0.99550867&14.9709 & 2.4720e-04\\
            &  & $10^{-1}$  &1.2520e-01 &0.98675776& 9.0239& 7.1221e-04\\ \midrule
    \multirow{16}{*}{Normalize} & & $10^{-4}$  & \multirow{2}{*}{1.9708e-03} & \multirow{2}{*}{0.99992925} & \multirow{2}{*}{27.0536} & \multirow{2}{*}{7.4911e-06}\\
    && $(484.48, 10^{-5})$-DP &&&&\\
            && $10^{-3}$ & \multirow{2}{*}{3.9300e-03} & \multirow{2}{*}{0.99977869}  & \multirow{2}{*}{24.0561}& \multirow{2}{*}{1.7936e-05}\\
    &DPS-& $(153.21, 10^{-5})$-DP &&&&\\     
            &based& $10^{-2}$ &  \multirow{2}{*}{8.3796e-03} & \multirow{2}{*}{0.99938870} & \multirow{2}{*}{20.7678} & \multirow{2}{*}{5.4881e-05}\\
    && $(48.45, 10^{-5})$-DP &&&&\\
            && $10^{-1}$ & \multirow{2}{*}{1.3867e-02} & \multirow{2}{*}{0.99861377} &\multirow{2}{*}{18.5801} & \multirow{2}{*}{1.1658e-04}\\ 
    && $(15.32, 10^{-5})$-DP &&&&\\ \cline{2-7}
            &  &$10^{-4}$ & \multirow{2}{*}{9.9565e-04} & \multirow{2}{*}{0.99996322} & \multirow{2}{*}{30.0189} & \multirow{2}{*}{8.6647e-06}\\
    && $(484.48, 10^{-5})$-DP &&&&\\
            && $10^{-3}$  &\multirow{2}{*}{2.3910e-03} &\multirow{2}{*}{0.99988467}& \multirow{2}{*}{26.2142} &\multirow{2}{*}{1.8149e-05}\\
    &DSG-& $(153.21, 10^{-5})$-DP &&&&\\
            &based& $10^{-2}$  & \multirow{2}{*}{5.7095e-03}&\multirow{2}{*}{0.99954313}& \multirow{2}{*}{22.4340} &\multirow{2}{*}{8.1432e-05} \\
    && $(48.45, 10^{-5})$-DP &&&&\\
            &  & $10^{-1}$  &\multirow{2}{*}{6.9409e-02} & \multirow{2}{*}{0.99145848} & \multirow{2}{*}{11.5858} &\multirow{2}{*}{5.7867e-04} \\
    && $(15.32, 10^{-5})$-DP &&&&\\
    \bottomrule
    \end{tabular}
    \caption{Quantitative analysis on reconstruction quality with noisy gradients. The similarity metrics are between the reconstructed image $\hat{\mathbf{x}}_0(\mathbf{x}_t)$ at peak performance and the target image.}
    \label{defense1}
\end{table*}

\subsubsection{Vulnerability of Different Machine Learning Models.}
Existing research has highlighted that different neural network models exhibit varying levels of susceptibility to the same reconstruction attack. However, a comprehensive explanation for this phenomenon has been lacking. In our analysis of Theorem \ref{lowerboundDSG}, we introduce a metric called Reconstruction Vulnerability ($RV$), which quantifies a model's susceptibility to reconstruction attacks. This metric provides a theoretical foundation for understanding why certain models are more vulnerable to privacy leakage than others.

To empirically validate the effectiveness of the $RV$ metric, we conduct experiments on a diverse set of models: MLP\_1, MLP\_2, MLP\_3, CNN, and ResNet9. All these models are parameterized with random fixed weights. We estimate the $RV$ values for each of these models using our proposed estimation method. The models were then subjected to the same reconstruction attack algorithms described earlier. We computed various metrics to quantify the differences between the reconstructed images and the original private images. The results are presented in Figure \ref{RVfigure} and Table \ref{defense-rv}, where models are ranked based on their estimated $RV$ values. 

\begin{figure}[htp!]
    \centering
    \vspace{-2mm}
    \subfigure[DPS-based Method]{
		\includegraphics[width=0.48\textwidth]{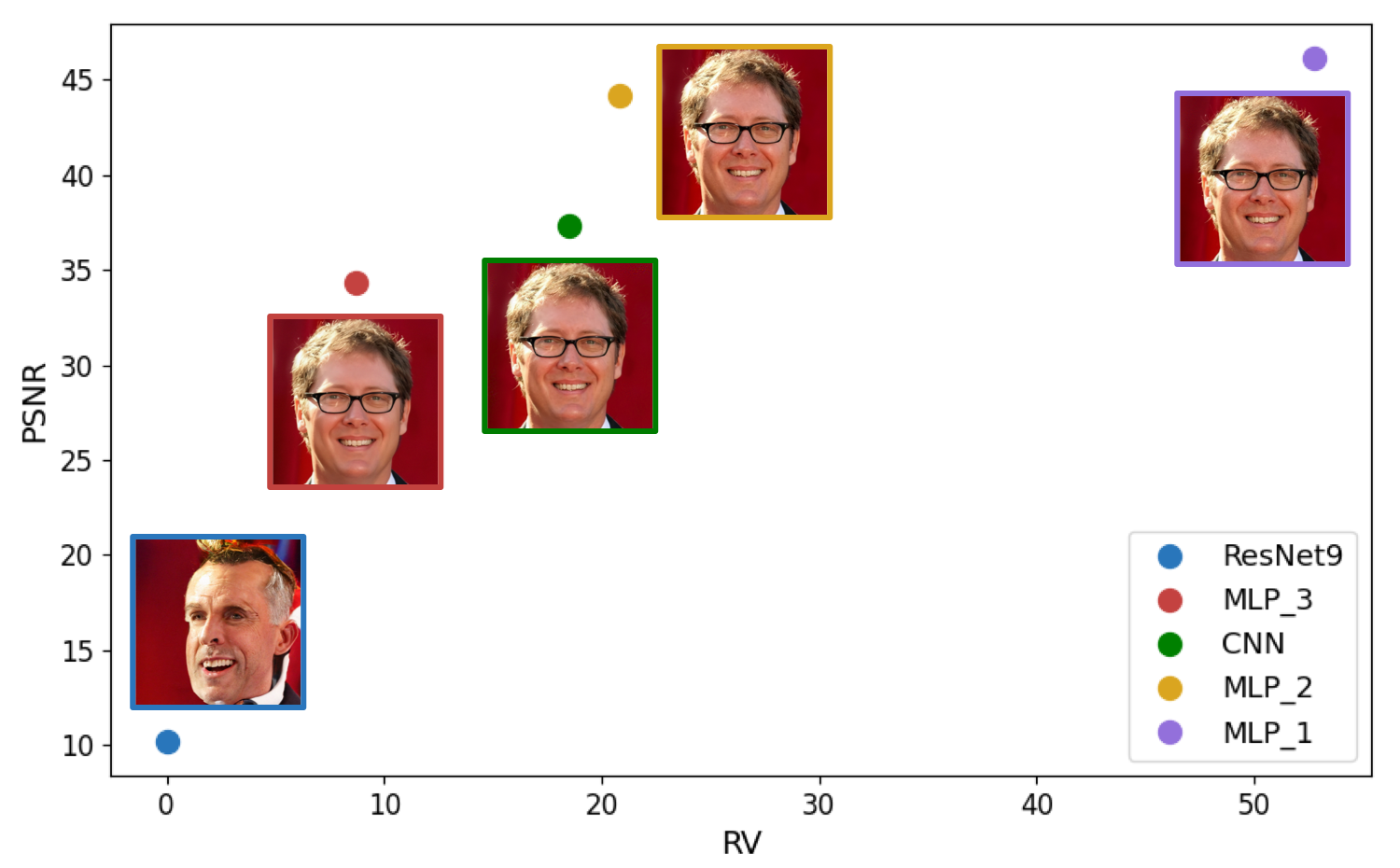}
		\label{DPS_model_RV}
    }
    \subfigure[DSG-based Method]{
		\includegraphics[width=0.48\textwidth]{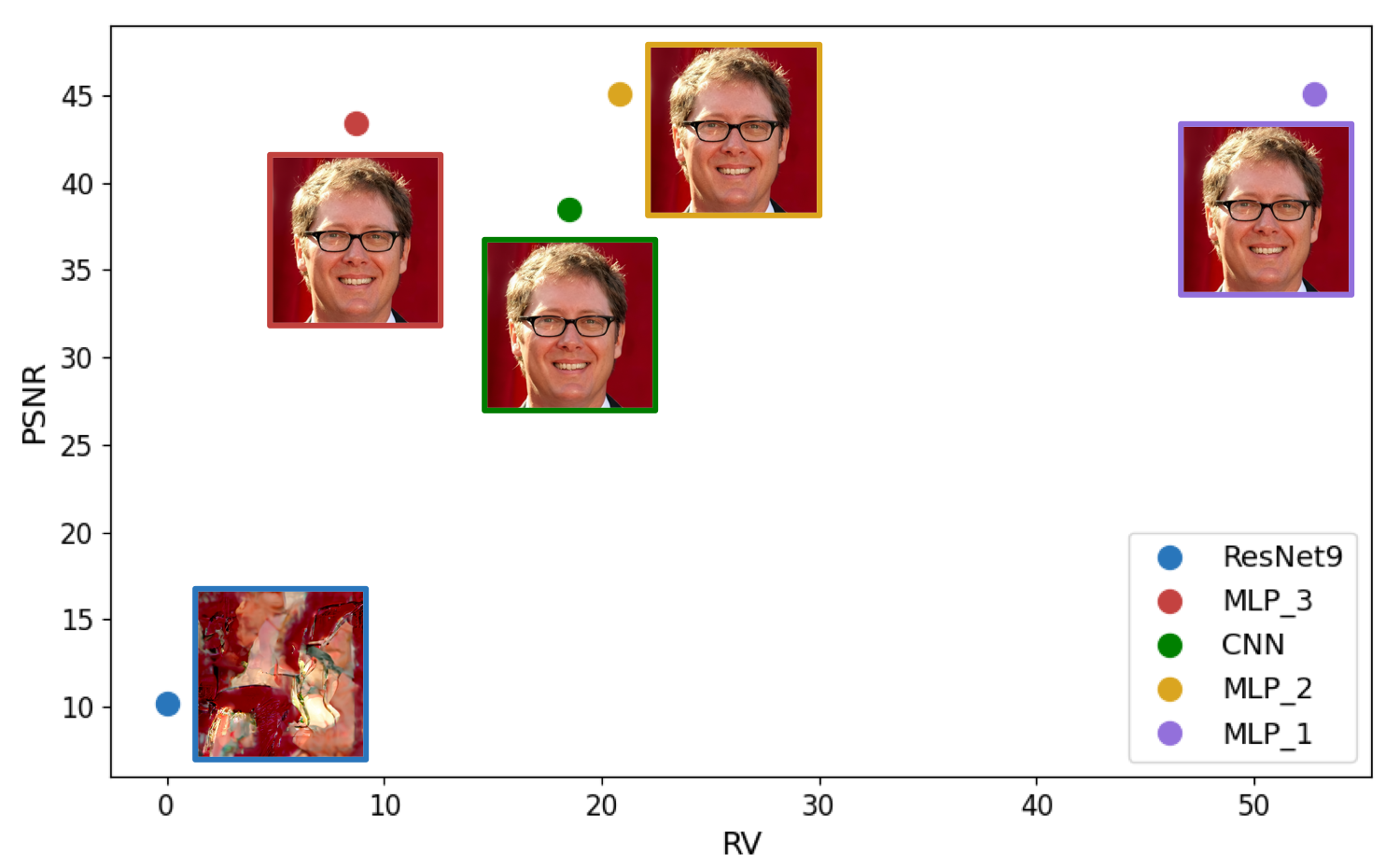}
		\label{DSG_model_RV}
    }
    \caption{Reconstruction results under different attacked models with different values of $RV$.}
    \label{RVfigure}
\end{figure}

\begin{table}[!htb]
    \centering
    \begin{tabular}{ccccccc}
    \hline
    \toprule
      \textbf{Method} & \textbf{Model}& \textbf{$RV$} & \textbf{MSE$\downarrow$} & \textbf{SSIM$\uparrow$} & \textbf{PSNR$\uparrow$} & \textbf{LPIPS$\downarrow$} \\
    \midrule
      \multirow{5}{*}{DPS-based} & ResNet9 & 0.0543 & 9.615e-02  & 0.98741 & 10.170 & 3.8624e-04\\
            & MLP\_3 & 8.7102 & 3.709e-04 & 0.99997 & 34.3071 & 2.9166e-06 \\
            & CNN & 18.5093  & 1.840e-04 & 0.99999 & 37.3518 & 1.1132e-06\\
            & MLP\_2 & 20.8503 & 3.822e-05 & 0.99999 & 44.1768 & 8.1932e-08\\ 
            & MLP\_1 & 52.7335 & 2.426e-05 & 0.99999 & 46.1509 & 5.6767e-08\\\midrule
      \multirow{5}{*}{DSG-based} & ResNet9 & 0.0543 & 9.500e-02  & 0.98712 & 10.223 & 4.1293e-04\\
            & MLP\_3 & 8.7102 & 3.125e-05 & 0.99999 & 45.052 & 9.8375e-08 \\
            & CNN & 18.5093  & 1.404e-04 & 0.99999 & 38.528 & 4.2582e-07\\
            & MLP\_2 & 20.8503 & 3.075e-05 & 0.99999 & 45.122 & 8.9244e-08\\ 
            & MLP\_1 & 52.7335 & 4.547e-05 & 0.99999 & 43.422 & 1.5980e-07\\
    \bottomrule
    \end{tabular}
    \caption{Quantitative analysis on reconstruction quality with different attacked models, corresponding to different $RV$.}
    \label{defense-rv}
\end{table}

Our experimental results reveal a significant trend: \textbf{models with larger $RV$ values tend to produce higher-quality reconstructed images}, indicating a higher risk of privacy leakage. This observation suggests a positive correlation between model's $RV$ value and its vulnerability to reconstruction attacks. High $RV$ Models like MLP\_1 yield reconstructed images with high PSNR and SSIM values, closely resembling the originals. Low MSE values further indicate minimal differences between the reconstructed and original images. Low $RV$ Models like ResNet9 produce lower-quality reconstructions, with lower PSNR and SSIM values and higher MSE, indicating greater dissimilarity from the original image.

While the correlation is not perfectly linear—due to factors such as model architecture complexity and training dynamics—the overall trend supports the relationship between $RV$ values and reconstruction quality. Models with higher $RV$ values are significantly more susceptible to reconstruction attacks.

\textbf{Understanding $RV$ as a Vulnerability Indicator.} The $RV$ metric encapsulates critical factors contributing to a model's vulnerability. From the aspect of gradient sensitivity, we think that models with higher $RV$ values may have gradients that are more informative about the input data, making them more exploitable by reconstruction algorithms. From the aspect of model complexity, we think that simple models (e.g., MLP\_1) may inadvertently "remember" more detailed information about the inputs in gradients.

\textbf{Practical Utility of $RV$ Metric.} The $RV$ metric serves as a practical and effective indicator for evaluating the vulnerability of different models to reconstruction attacks:

\textit{Model Selection}: Practitioners can use $RV$ values to select models with lower vulnerability for applications where data privacy is paramount.

\textit{Risk Assessment}: Organizations can assess the privacy risks associated with deploying specific models in sensitive environments.

\textit{Defense Mechanism Development}: Understanding $RV$ can guide the design of targeted defense strategies, such as gradient perturbation or architecture modifications, to mitigate privacy risks.

However, we think there are still some limitations concerning $RV$. The first is non-Perfect correlation. The absence of a perfect positive correlation indicates that other factors may also influence vulnerability, such as training data distribution and regularization techniques. The second is how to estimate $RV$ more accurately in practice. While we proposed a method to estimate $RV$ values, refining this estimation for different types of models and attack scenarios remains an area for future research.

In summary, our findings contribute a significant advancement in understanding model vulnerability to reconstruction attacks by introducing the Reconstruction Vulnerability ($RV$) metric. By empirically demonstrating that models with higher $RV$ values are more susceptible to producing high-quality reconstructed images. These results bridge a critical gap in the field and opens avenues for developing more robust models and defense mechanisms against reconstruction attacks. Future work will focus on refining the $RV$ estimation methods and exploring additional factors that influence model vulnerability.

\subsubsection{Impacts of Step Size and Guidance Rate}
In this section, we examine how the step size and the guidance rate in our proposed reconstruction attack methods—specifically, the step size $\zeta_i$ in the DPS-based method and the guidance rate $m_r$ in the DSG-based method—affect the quality of the reconstructed images. By systematically varying these parameters and evaluating the resulting metrics of the reconstruction quality, we aim to provide insights into optimizing these methods for effective image reconstruction from gradients.

\textbf{Impact of Step Size $\zeta_i$ in DPS-Based Method.} Table \ref{scale_DPS} shows the effect of varying the step size $\zeta_i$ on the reconstruction quality in the DPS-based method. The step size $\zeta_i$ in the DPS-based method controls the scale of the condition guidance for the intermediate results of the diffusion model's reverse process that minimizes the difference between the target gradient $g$ and the gradient $g_0(\hat{\mathbf{x}}_0(\mathbf{x}_t))$ computed from the current prediction $\hat{\mathbf{x}}_0(\mathbf{x}_t)$. Increasing $\zeta_i$ amplifies the influence of the gradient information in guiding the reconstruction. The finding is that the best reconstruction metrics are observed around the scale of 230, where the MSE is lowest ($1.0550e-04$), SSIM is highest ($0.99999696$), PSNR is highest ($39.7673$), and LPIPS is lowest ($1.9310e-07$). At this point, the step size is sufficient to effectively minimize the gradient difference without causing instability in the updates. Beyond the optimal scale, further increasing $\zeta_i$ leads to a deterioration in reconstruction quality. This is likely due to overstepping in the gradient descent updates, causing the algorithm to overshoot the minimum of the loss function. Large step sizes can introduce oscillations or divergence in the optimization process, resulting in poorer reconstructions.

\begin{table}[!htb]
    
    \centering
    \begin{tabular}{ccccc}
    \hline
    \toprule
        \textbf{Step Size} & \textbf{MSE$\downarrow$} & \textbf{SSIM$\uparrow$} & \textbf{PSNR$\uparrow$} & \textbf{LPIPS$\downarrow$} \\
    \midrule
             30 & 1.5701e-03  & 0.99994308 & 28.0407 & 8.4652e-06\\
             70 & 7.5194e-04 & 0.99997950 & 31.2382 & 2.2519e-06 \\
             110  & 2.6727e-04 & 0.99999362 &35.7306 & 4.8490e-07\\
             150 & 2.4821e-04 & 0.99999398 & 36.0518 & 5.2666e-07\\ 
             190 & 1.5373e-04 & 0.99999577 & 38.1324 & 2.7950e-07\\
             230 & \textbf{1.0550e-04} & \textbf{0.99999696} & \textbf{39.7673} & \textbf{1.9310e-07}\\
             270 & 1.5543e-04 & 0.99999601 & 38.0847 & 2.4074e-07\\
             310 & 6.0308e-03 & 0.99969864 & 22.1962 & 2.9200e-05\\
    \bottomrule
    \end{tabular}
    \caption{The impact of step size $\zeta_i$ in DPS-based method on the reconstruction quality.}
    \label{scale_DPS}
\end{table}

\textbf{Impact of Guidance Rate $m_r$ in DSG-Based Method.} Table \ref{scale_DSG} presents the effect of varying the guidance rate $m_r$ on the reconstruction quality in the DSG-based method. The guidance rate values range from 0.01 to 1.00. The best reconstruction metrics are observed at $m_r = 0.20$, where MSE is lowest ($7.7215e-05$), SSIM is highest ($0.99999720$), PSNR is highest ($41.1230$), and LPIPS is lowest ($2.0519e-07$). 

The guidance rate $m_r$ in the DSG-based method determines the extent to which the conditional guidance $d^{*}$ influences the update direction $d_m$ during the denoising process. At low values of $m_r$, the influence of the conditional guidance is minimal, and the updates are dominated by the stochastic component $d^{\text{sample}}$. This can result in less effective guidance towards the original image. Increasing $m_r$ enhances the contribution of the conditional guidance, improving the alignment with the gradient information and leading to better reconstruction quality. An optimal value of $m_r = 0.20$ provides a balance where the guidance effectively steers the reconstruction without overwhelming the stochastic exploration inherent in the diffusion process. Beyond the optimal point, further increasing $m_r$ diminishes the stochastic component's role, which is essential for exploring the solution space and avoiding local minima. Excessive reliance on the guidance can cause the updates to become overly deterministic and potentially lead to convergence issues or suboptimal reconstructions. High guidance rates may also amplify any noise or inaccuracies in the gradient estimates, causing the reconstruction to overfit to erroneous gradient information and degrade in reconstruction quality.

\begin{table}[!htb]
    \centering
    \begin{tabular}{ccccc}
    \hline
    \toprule
        \textbf{Guidance} & \multirow{2}{*}{\textbf{MSE$\downarrow$}} & \multirow{2}{*}{\textbf{SSIM$\uparrow$}} & \multirow{2}{*}{\textbf{PSNR$\uparrow$}} & \multirow{2}{*}{\textbf{LPIPS$\downarrow$}} \\
        \textbf{Scale} &&&&\\
    \midrule
             0.01 & 2.0406e-03 & 0.99991947 & 26.9023 & 6.9703e-06\\
             0.05 & 3.8725e-04 & 0.99998808 & 34.1201 & 1.1588e-06 \\
             0.10 & 2.1778e-04 & 0.99999398 & 36.6198 & 4.1730e-07\\
             0.15 & 1.3556e-04 & 0.99999613 & 38.6787 & 2.4473e-07\\ 
             0.20 & \textbf{7.7215e-05} & \textbf{0.99999720} & \textbf{41.1230} & \textbf{2.0519e-07}\\
             0.30 & 1.4769e-04 & 0.99999571 & 38.3065 & 2.8713e-07\\
             0.40 & 1.0778e-04 & 0.99999684 & 39.6746 & 1.9574e-07\\
             0.60 & 2.0407e-04 & 0.99999332 & 36.9023 & 9.0661e-07\\
             0.80 & 2.7478e-04 & 0.99998826 & 35.6101 & 1.1251e-06\\
             1.00 & 3.1045e-04 & 0.99998742 & 35.0800 & 1.9515e-06\\
    \bottomrule
    \end{tabular}
    \caption{The impact of guidance scale $m_r$ in DSG-based method on the reconstruction quality.}
    \label{scale_DSG}
\end{table}

In summary, both the step size $\zeta_i$ in the DPS-based method and the guidance rate $m_r$ in the DSG-based method have optimal values that maximize reconstruction quality. These optimal values represent a trade-off between effectively utilizing gradient information and maintaining stability in the reconstruction process. Using values that are too low may result in insufficient guidance, while values that are too high can introduce instability or overfitting. This may be correlated with the role of stochasticity in diffusion models. The stochastic components in the diffusion process play a vital role in ensuring diversity and robustness in the reconstructions. Maintaining a balance between deterministic guidance and stochastic exploration is essential for optimal reconstruction performance.

\subsubsection{Comparison With Other Baselines.}

We first compare our proposed methods with other baselines without adding differentially private noise to gradients. The quantitative results are presented in Table \ref{reconstruction_similarity}. The resolution of the reconstructed images of DLG\cite{b5}, FinetunedDiff\cite{b25}, our proposed DPS-based method and DSG-based method is $256 \times 256$ pixels while other baselines can only reconstruct images of $64 \times 64$ pixels. As shown in Table \ref{reconstruction_similarity}, our proposed two methods can guide the diffusion model to extract high-resolution images and the reconstruction performances are even better. This presents severe privacy leakage risks if the adversary steals the original gradients.

\begin{table*}[!htp]
    \small
    \scalebox{0.76}{
        \begin{tabular}{cccccccccc}
            \hline
            \toprule
            \textbf{\multirow{2}{*}{Metrics}} & \textbf{\multirow{2}{*}{DLG}} & \textbf{\multirow{2}{*}{IG}} & \textbf{\multirow{2}{*}{GI}} & \textbf{\multirow{2}{*}{GIAS}} & \textbf{\multirow{2}{*}{GGL}} & \textbf{\multirow{2}{*}{GIFD}} & \textbf{Finetuned} & \textbf{DPS-} & \textbf{DSG-}  \\ 
            &&&&&&&\textbf{Diff}& \textbf{based} &\textbf{based} \\
            \midrule
            \textbf{MSE$\downarrow$} & 0.0480 & 0.0196 & 0.0223 & 0.0458 & 0.0179 & 0.0098 & 0.0030 & \textbf{0.0001} & \textbf{0.0001} \\ 
            \textbf{SSIM$\uparrow$} & 0.9979 & 0.9982 & 0.9978 & 0.9953 & 0.9987 & 0.9991 & 0.9999 & \textbf{0.9999}  & \textbf{0.9999} \\ 
            \textbf{PSNR$\uparrow$} & 13.1835 & 17.0756 & 16.5109 & 13.3885 & 17.4923 & 20.0534 & 25.2863 &  39.7673 & \textbf{41.1229} \\
            \textbf{LPIPS$\downarrow$} & 2.3453e-04 & 1.1612e-04 & 1.4274e-04 & 3.9351e-04 & 1.1937e-04 & 6.6672e-05 & 1.3781e-05 &  \textbf{1.9310e-07} & 2.0519e-07 \\ \bottomrule
        \end{tabular}
    }
   \caption{Quantitative evaluations of SOTA baselines and our methods.}
   \label{reconstruction_similarity}
\end{table*}

We also compare our proposed methods with other baselines under the framework of differential privacy by adding Gaussian noise to the gradients. Image reconstruction is performed using various baselines, with the noise variance fixed at 0.01. The corresponding quantitative results are presented in Table \ref{noise_baseline}. As shown, our DPS-based method achieves the highest PSNR, outperforming the optimal baseline, GGL, by 3.5087. Additionally, the PSNR of our DSG-based method exceeds that of GGL by 0.0727, demonstrating its superior performance. Despite the injection of Gaussian noise into the gradients, experimental results indicate that a potential risk of privacy leakage persists. This underscores that merely adding noise satisfying differential privacy is insufficient to fully safeguard the sensitive information embedded in the gradients. Therefore, it is essential to explore more robust methods for protecting gradient privacy in practical applications.

\begin{table*}[!h]
   \centering
   \scalebox{0.76}{
        \begin{tabular}{cccccccccc}
            \hline
            \toprule
            \textbf{Methods} & \textbf{DLG} & \textbf{IG} & \textbf{GI} & \textbf{GIAS} & \textbf{GGL} & \textbf{GIFD} & \textbf{FinetunedDiff} & \textbf{DPS-based} & \textbf{DSG-based}  \\ \midrule
            \textbf{PSNR$\uparrow$} & 6.5871 & 11.2677 & 10.4968 & 12.1276 & 14.8982 & 13.7118 & 11.6675 &  \textbf{18.4069} & 14.9709 \\
             \bottomrule
        \end{tabular}
    }
   \caption{Quantitative evaluations of SOTA baselines and our methods with Gaussian noise of variance 0.01 added to the original gradients.}
   \label{noise_baseline}
\end{table*}

\section{Mitigation Strategies}
Building on the experimental results and analyses presented, we propose several targeted mitigation strategies to defend against reconstruction attacks that exploit diffusion models. 

To reduce the risk of privacy leakage, we recommend designing or selecting model architectures that inherently exhibit lower Reconstruction Vulnerability ($RV$) values. Additionally, we suggest integrating $RV$ estimation into the training process to continuously monitor and minimize $RV$ in real time. This can be achieved by incorporating regularization techniques or modifying training objectives to penalize high $RV$ values, encouraging the model to learn representations that are less prone to leakage through gradients.

For gradient protection, introducing carefully designed perturbations into the gradients can disrupt attack algorithms that rely on precise gradient values and directions. Potential strategies include adding structured noise or perturbations to the gradients to mislead the reconstruction process and applying transformations that preserve the gradients' utility for model training while obscuring sensitive information required for reconstruction.

\section{Conclusion}
In this paper, we present two novel methods for leveraging conditional diffusion models to reconstruct private images from leaked or stolen gradients. These methods require minimal adjustments to the diffusion model’s process and do not rely on prior knowledge, making them potent tools for attackers. Our theoretical analysis and experimental results demonstrate the significant impact of differential privacy noise and model type on the quality of reconstructed images, revealing key insights into the relationship between the diffusion model’s denoising capabilities and differential privacy’s noise defense. Our work highlights the importance of understanding the adversarial interaction between differential privacy and diffusion models, providing a foundation for future research in enhancing privacy protection mechanisms in machine learning.

\bibliographystyle{named}
\bibliography{neurips_2024}

\end{document}